\DeclareMathOperator*{\argmin}{arg\,min}
\DeclareMathOperator*{\minimize}{minimize}
\newcommand{\eqdef}{\coloneqq}
\newcommand{\sqnorm}[1]{\left\| #1 \right\|^2}
\newcommand{\Exp}[1]{\mathbb{E}\!\left[ #1 \right]}
\definecolor{mydarkred}{rgb}{0.8,0.0,0.0}
\newcommand{\algn}[1]{{\sf\color{mydarkred}\scalefont{0.958}{#1}}\xspace}
\newcommand{\algno}{\algn{TAMUNA}}
\definecolor{mydarkgreen}{rgb}{0,0.55,0}
\newcommand{\mrho}{\tau}
\newcommand{\myc}{\alpha}
\newcommand{\LL}{\mathcal{L}}
\newcommand{\LS}{L}
\definecolor{mylgreen}{rgb}{0.87,1,0.82}
\definecolor{mygreen1}{rgb}{0,0.8,0}
\definecolor{gray}{rgb}{0.9,0.9,0.9}
\newcommand{\cmark}{\textcolor{mygreen1}{\ding{51}}}%
\newcommand{\xmark}{\textcolor{red}{\ding{55}}}%
\definecolor{yel}{rgb}{1,0.95,0.8}
\newtheorem{theorem}{Theorem}
\newtheorem{remark}[theorem]{Remark}
\newtheorem{corollary}[theorem]{Corollary}
\title{\textbf{TAMUNA: Doubly Accelerated Distributed Optimization under Partial Participation}}
 \author{Laurent Condat${\!\:}^{1,2}$ \qquad Ivan Agarsk\'y${\!\:}^{3,4}$ \qquad 
 Grigory Malinovsky${\!\:}^1$ \qquad 
 Peter Richt\'{a}rik${\!\:}^{1,2}$\\
 \phantom{xx}
 \\
 ${}^1$Computer Science Program, CEMSE Division,\\
 King Abdullah University of Science and Technology (KAUST)\\ 
 Thuwal, 23955-6900, Kingdom of Saudi Arabia\\[2mm] 
 ${}^2$SDAIA-KAUST Center of Excellence in Data Science\\
 and Artificial Intelligence (SDAIA-KAUST AI)\\[2mm]
${}^3$Brno University of Technology\\ 
  Brno, Czech Republic\\[2mm] 
  ${}^4$Kempelen Institute of Intelligent Technologies (KInIT)\\ 
   Bratislava, Slovakia}
 \date{May 2023. Minor revision in May 2026}
\begin{document}

\maketitle

\begin{abstract}
In distributed optimization and federated learning, slow and costly communication between parallel devices and the central server constitutes the primary bottleneck. To alleviate this burden, two strategies have emerged: 1) local training (LT), which reduces communication frequency by performing multiple local computations between rounds, and 2) compression (CC), which consists of transmitting lower-dimensional, compact representations. Recent theoretical advances have successfully combined LT and CC to achieve doubly-accelerated communication rates, with respect to both condition number and model dimension. However, these methods have a major drawback: they require full client participation and break down when idle clients miss communication triggers. We introduce TAMUNA, the first algorithm to successfully intertwine LT, CC, and partial participation. By decoupling primal model updates from dual control variates, TAMUNA overcomes the architectural deadlock of prior methods. In the strongly convex setting, TAMUNA converges linearly to the exact solution, establishing a new state of the art by exhibiting doubly-accelerated convergence, while supporting arbitrary levels of client participation.
\end{abstract}

\newpage 
\tableofcontents
\newpage 

\section{Introduction}

In traditional machine learning, data is first gathered and centralized in a datacenter before processing. By contrast, the modern paradigm of \textbf{Federated Learning (FL)} \citep{kon16a,kon16,mcm17, bon17} trains models collaboratively utilizing the wealth of information stored on edge devices, such as mobile phones and hospital workstations, without ever sharing private data. While this approach addresses critical data privacy and security challenges \citep{kai19,li20,wan21}, it introduces a severe bottleneck: \textbf{communication}. The parallel devices must continually communicate back and forth with a distant orchestrating server over networks that are typically slow, costly, and unreliable. Overcoming this communication bottleneck is the primary hurdle before FL can achieve ubiquitous adoption.

To reduce this communication burden, two distinct strategies have become foundational: 1) \textbf{Local Training (LT)}, which performs multiple local computations between communication rounds to send richer updates, and 2) \textbf{Communication Compression (CC)}, which transmits compact, lower-dimensional representations instead of full vectors. Recently, it was shown that these two mechanisms can be successfully combined to achieve doubly-accelerated communication rates \citep{con26cs}.

However, existing frameworks that successfully integrate LT and CC rely on communication triggers that fundamentally assume full client participation. In practical FL deployments, it is entirely unrealistic to assume all clients are available 100\% of the time. \textbf{Partial participation (PP)} is an essential feature. Introducing PP into a loopless LT architecture is highly non-trivial: idle clients do not perform computations and subsequently miss random communication triggers, which breaks the delicate variance-reduction mechanisms required to prevent client drift.

In this paper, we overcome this architectural barrier with \algno, a novel randomized algorithm for communication-efficient distributed optimization. \algno seamlessly intertwines LT, CC, and PP. By decoupling the primal model updates from the dual control variate updates, \algno provably retains the benefits of all three mechanisms. Being variance-reduced \citep{gor202,gow20a}, 
it converges linearly to an exact solution when exact gradients are used. The main feat is that \algno achieves a \textbf{doubly-accelerated convergence rate}, with respect to both the condition number of the functions and the model dimension, while robustly supporting arbitrary levels of client sampling.

In the remainder of this section, we formulate our setup, propose a comprehensive model to characterize total communication complexity, review the state of the art, and summarize our theoretical contributions.

\subsection{Formalism}

 We consider the standard client-server federated setting, where $n \geq 2$ clients perform local computations in parallel and communicate with a central coordinating server. Our objective is to solve the finite-sum optimization problem:
 \begin{equation}
 \minimize_{x\in\mathbb{R}^d}\  f(x) \eqdef \frac{1}{n}\sum_{i=1}^n f_i(x),\label{eqpro1}
 \end{equation}
 where $f_i:\mathbb{R}^d\rightarrow \mathbb{R}$ represents the individual loss function of client $i\in [n]\eqdef \{1,\ldots,n\}$. Problem \eqref{eqpro1} is fundamental, serving as the standard abstraction for empirical risk minimization, the dominant framework in supervised machine learning. Despite its apparent mathematical simplicity, solving it efficiently is highly challenging in the federated regime, where both the number of clients $n$ and the model dimension $d$ can be massive.

  Every local function $f_i$ is assumed to be $\LL$-smooth and $\mu$-strongly convex for some $\LL \geq \mu > 0$.\footnote{A function $f:\mathbb{R}^d\rightarrow \mathbb{R}$ is $\LL$-smooth if it is differentiable and its gradient is Lipschitz continuous with constant $\LL$; that is, for every $x,y\in\mathbb{R}^d$, $\|\nabla f(x)-\nabla f(y)\|\leq \LL \|x-y\|$. It is $\mu$-strongly convex if $f-\frac{\mu}{2}\|\cdot\|^2$ is convex. We use the Euclidean norm throughout and refer to \cite{bau17} for standard notions of convex analysis.} Due to strong convexity, the sought solution $x^\star$ of \eqref{eqpro1} exists and is unique, and we define the condition number as $\kappa \eqdef \frac{\LL}{\mu}$. We focus primarily on the strongly convex case because analyzing linear convergence rates provides clear mathematical insights into the underlying algorithmic mechanisms of LT, CC, and PP. While extending variance-reduced algorithms to nonconvex objectives 
  remains an active and open area of research, 
 doing so requires fundamentally different proof techniques \cite{kar21,das22} and falls outside the scope of this paper.
   Crucially, our setting accommodates \emph{arbitrary data heterogeneity}: we make no assumptions whatsoever regarding the similarity of the local functions $f_i$ beyond smoothness and strong convexity. 
  
  To solve \eqref{eqpro1}, the baseline algorithm of Gradient Descent (\algn{GD}) iterates the following for $t=0,1,\ldots$:
  \begin{equation*}x^{t+1} \eqdef x^t - \frac{\gamma}{n}\sum_{i=1}^n \nabla f_i(x^t),
  \end{equation*}
  for some stepsize $\gamma \in (0,\frac{2}{\LL})$. Iteration $t$ proceeds in three steps: 1) the server broadcasts $x^t$ to all clients; 2) the clients compute their local gradients $\nabla f_i(x^t)$ and send them to the server in parallel; and 3) the server averages the received vectors to form $\nabla f(x^t)$ and performs the gradient descent step. For $\gamma = \Theta(\frac{1}{\LL})$, \algn{GD} converges linearly, reaching $\epsilon$-accuracy in $\mathcal{O}(\kappa \log\epsilon^{-1})$ iterations. Because full $d$-dimensional vectors are communicated at every iteration, the communication complexity of \algn{GD} is $\mathcal{O}(d\kappa \log\epsilon^{-1})$. Our objective is to achieve a \textbf{twofold acceleration} over \algn{GD}, improving the complexity's dependence on both $\kappa$ and $d$.

\subsection{A model of Asymmetric Communication}\label{secacr}

To rigorously quantify communication efficiency, we distinguish between two directions of information transfer:
\begin{itemize}
\item \textbf{Uplink Communication (UpCom)}: The parallel transmission of locally computed vectors from the clients to the server.
\item \textbf{Downlink Communication (DownCom)}: The broadcast of the aggregated global model from the server to the clients.
\end{itemize}
In federated networks, UpCom is typically the primary bottleneck, much like uploading is slower than downloading on internet or cellular networks. This asymmetry stems from bandwidth limits, network protocols, and the severe computational overhead placed on the server, which must simultaneously decode, process, and aggregate $n$ high-dimensional vectors.

We measure the uplink or downlink \textbf{communication complexity}  as the expected number of communication rounds required to reach $\epsilon$-accuracy, multiplied by the number of real values (floats) transmitted between the server and any single client during one round. For example, the UpCom and DownCom complexity of \algn{GD} is $\mathcal{O}(d \kappa \log\epsilon^{-1})$. While one could measure this in bits, assuming a standard floating-point representation makes this a mere constant factor that does not alter the asymptotic bounds.

Because UpCom is typically slower than DownCom, we evaluate algorithms using the comprehensive \textbf{total communication  (TotalCom)} metric \citep{con26cs}, defined as a weighted sum of the two complexities:
\begin{equation}
\mbox{TotalCom} = \mbox{UpCom} + \myc \cdot \mbox{DownCom},\label{eqtotcom}
\end{equation}
where the weight $\myc \in [0,1]$ dictates the relative cost of downloading. A symmetric, idealized communication regime corresponds to $\myc=1$, while strictly ignoring DownCom (since UpCom is the limiting factor) corresponds to $\myc=0$. In practice, realistic values for $\myc$ are positive but small. By providing explicit TotalCom expressions for any arbitrary $\myc \in [0,1]$, our evaluation model is significantly more robust than literature that exclusively analyzes UpCom ($\myc=0$).

\section{Related Work}\label{secsota}

In this section, we review the literature concerning LT, CC, and PP.

 \subsection{Local Training (LT)}

In standard \algn{GD}, gradient vectors are transmitted immediately after computation. Local Training (LT) introduces the simple yet highly effective strategy of executing multiple local descent steps before communicating with the server. Initially introduced as a heuristic in the popular \algn{FedAvg} algorithm \citep{mcm17}, LT demonstrated immense practical efficiency despite an initial lack of theoretical guarantees. Early analyses of LT focused on the homogeneous (i.i.d.) data regime or relied on bounded gradient diversity assumptions \citep{had19}. Subsequent work extended the analysis to the heterogeneous regime, which is far more representative of FL \citep{sti19, kha20a, li20a,woo20,gor21,gla22}. 
A well-known challenge in this regime is \emph{client drift} \citep{mal20}: if too many local steps are taken, local models drift toward the isolated minimizers of their respective cost functions $f_i$, rather than the global optimum $x^\star$. To combat this, a subsequent generation of variance-reduced LT methods, such as \algn{Scaffold} \citep{kar20}, \algn{S-Local-GD} \citep{gor21}, and \algn{FedLin} \citep{mit21}, introduced control variates to correct the drift. While these methods converge linearly to the exact solution, their communication complexity remains $\mathcal{O}(d\kappa \log \epsilon^{-1})$, offering no asymptotic improvement over \algn{GD}.

The recent development of accelerated LT methods constitutes a theoretical breakthrough. 
\algn{Scaffnew} \citep{mis22} was the first LT-based algorithm to achieve an accelerated communication complexity of $\mathcal{O}(d\sqrt{\kappa} \log \epsilon^{-1})$. By triggering communication randomly with a small probability $p$ at each iteration, the expected number of local steps between rounds becomes $1/p$. Setting $p=1/\sqrt{\kappa}$ yields the optimal condition number dependency \citep{sca19}. \algn{Scaffnew} was later extended to utilize variance-reduced stochastic gradients \citep{mal22,gor202,gow20a} and was analyzed as a special case of the \algn{RandProx} primal--dual framework \citep{con22rp}. Alternatively, \algn{APDA-Inexact} \citep{sad22b} and \algn{5GCS} \citep{mal22b} approached LT via an inner loop computing inexact proximity operators.

\subsection{Partial Participation (PP)}

Partial participation (PP), or client sampling, allows a protocol to progress even when only a subset of clients is active during a given round. Throughout the paper, we denote by $c\in\{2,\ldots,n\}$ the cohort size, representing the number of active clients participating in any given round. Because edge devices frequently drop offline, PP is an indispensable feature for any practical FL deployment. While PP is well understood for standard SGD-type methods \citep{gow19,con22mu}, successfully integrating it with LT has proven historically difficult. For instance, \algn{Scaffold} supports both LT and PP but yields no theoretical communication acceleration. \algn{FedVARP} \citep{jhu22} incorporates both LT and PP, but its analysis is restricted to nonconvex problems under bounded global variance assumptions that do not apply to our setting.

Crucially, the accelerated \algn{Scaffnew} algorithm strictly requires full participation. Prior to our work, \algn{5GCS} \citep{mal22b} stood as the only algorithm enabling both LT and PP while enjoying accelerated communication. However, \algn{5GCS} relies on an indirect, two-level combination (random selection of proximity operators computed inexactly via an inner LT loop) based on \algn{Point-SAGA} \citep{def16}. Moreover, \algn{5GCS} accelerates with respect to $\kappa$ but not $d$, and requires a relatively large number of local steps per round (at least $\mathcal{O}\big(\sqrt{c\kappa/n} \log \kappa\big)$).

\algno, by contrast, relies on intertwined stochastic processes to directly combine LT and PP.

\subsection{Communication Compression (CC)}

Compression further drastically reduces the communication load by applying a (possibly randomized) operator $\mathcal{C}:\mathbb{R}^d \rightarrow \mathbb{R}^d$, enabling a more compact representation of the transmitted vector. A compressor $\mathcal{C}$ is unbiased if $\Exp{\mathcal{C}(x)}=x$ for all $x\in\mathbb{R}^d$; otherwise, it is biased. A popular unbiased compressor is \texttt{rand}-$k$, for some $k\in [d]\eqdef \{1,\ldots,d\}$, which selects $k$ coordinates of $x$ uniformly at random, multiplies them by the scaling factor $d/k$, and sets all remaining coordinates to zero. While the theoretical output $\mathcal{C}(x)$ still formally lives in $\mathbb{R}^d$ for analysis purposes, in practice, only the $k$ non-zero selected elements are actually transmitted, along with a small number of additional bits required to encode their indices. This allows $k$ to be as small as 1, yielding a massive practical compression factor of $d/k$. Another common sparsifying compressor is \texttt{top}-$k$, which retains only the $k$ coordinates with the largest absolute values, and sets the unselected coordinates to zero \citep{bez20}. \texttt{top}-$k$ is deterministic and biased.

The variance-reduced \algn{DIANA} algorithm \cite{mis19d} marked a milestone by proving linear convergence for a broad class of unbiased compressors. Using independent \texttt{rand}-$1$ compressors for UpCom, \algn{DIANA}'s complexity is $\mathcal{O}\big((\kappa(1+\frac{d}{n})+d) \log \epsilon^{-1}\big)$, greatly improving upon \algn{GD} when $n$ is large. \algn{DIANA} has inspired numerous extensions \cite{hor22d,gor202,zli20}, including \algn{DIANA-PP} \cite{con22mu}, which supports PP. However, \algn{DIANA} broadcasts the full, uncompressed model during DownCom, meaning its TotalCom complexity can actually exceed that of \algn{GD}. Bidirectional compression variants exist \cite{gor203,phi20,liu20,con22mu}, but they do not improve the TotalCom complexity. Algorithms using biased compressors, such as \algn{EF21} \cite{ric21,fat21,con22e}, also converge linearly but retain the suboptimal $d\kappa$ dependency.

\subsection{Bridging the Gap: Unifying LT, CC, and PP}

Relying on LT or CC alone is insufficient to fully optimize communication. Recently, \citet{con26cs} successfully merged these two paradigms with \algn{CompressedScaffnew}, the first algorithm to achieve a doubly-accelerated TotalCom rate (improving dependencies on both $\kappa$ and $d$). However, the fundamental limitation of \algn{CompressedScaffnew} is its loopless, randomized communication structure, which inherently mandates $100\%$ client participation. If clients drop out, they miss random communication triggers and fail to update their control variates, instantly breaking the variance reduction mechanism. Therefore, applying \algn{CompressedScaffnew} to real-world FL is impractical. \algno serves as the necessary, mathematically rigorous architectural evolution of \algn{CompressedScaffnew}. By introducing a novel, decoupled two-loop stochastic process, \algno successfully isolates the primal model updates from the dual control variate updates. This structural innovation seamlessly intertwines LT, CC, and PP, explicitly generalizing the double acceleration of \algn{CompressedScaffnew} to arbitrary levels of partial participation. As summarized in Tables~\ref{tab1c} and \ref{tab1}, \algno establishes a new state of the art.

 \begin{table*}[t]
 \centering
 \caption{UpCom complexity ($\myc=0$) of linearly converging algorithms with LT or CC and allowing for PP (with exact gradients). The $\widetilde{\mathcal{O}}$ notation hides the $\log\epsilon^{-1}$ factor (and other log factors for \algn{Scaffold}). 
 $c\in\{2,\ldots,n\}$ is the number of participating clients and the other notations are recalled in Table~\ref{tab2}.
 \label{tab1c}\medskip}
 \begin{tabular}{ccccc}
 Algorithm&LT&CC&UpCom \\
 \hline
 \algn{DIANA-PP}${}^{\!(a)}$&\xmark&\cmark&
 $\widetilde{\mathcal{O}}\!\left((1+\frac{d}{c})\kappa+d\frac{n}{c} \right)$
 \\
  \algn{Scaffold}&\cmark&\xmark&$\widetilde{\mathcal{O}}(d\kappa+d\frac{n}{c})$\\
   \algn{5GCS}&\cmark&\xmark&$\widetilde{\mathcal{O}}\!\left(d\sqrt{\kappa}\sqrt{\frac{n}{c}}+d\frac{n}{c}\right)$\\
   \rowcolor{mylgreen}
  \algn{TAMUNA}&\cmark&\cmark&$\widetilde{\mathcal{O}}\!\left( \sqrt{d}\sqrt{\kappa}\sqrt{\frac{n}{c}}+d\sqrt{\kappa}\frac{\sqrt{n}}{c}+d\frac{n}{c}\right)$\\
 \hline
 \end{tabular}
 \begin{flushleft}
$\qquad$\small $(a)$ using independent \texttt{rand}-1 compressors, for instance. Note that $\mathcal{O}(\sqrt{d}\sqrt{\kappa}\sqrt{\frac{n}{c}}+d\frac{n}{c})$ is better than $\mathcal{O}(\kappa+d\frac{n}{c})$ and $\mathcal{O}(d\sqrt{\kappa}\frac{\sqrt{n}}{c}+d\frac{n}{c})$ is better than $\mathcal{O}(\frac{d}{c}\kappa+d\frac{n}{c})$, 
so that \algno has a better complexity than \algn{DIANA-PP}.
 \end{flushleft}
 \end{table*}

 \begin{table*}[t]
 \centering
 \caption{TotalCom complexity of linearly converging algorithms using Local Training (LT), Communication Compression (CC), or both, in case of full participation and exact gradients. 
 The $\widetilde{\mathcal{O}}$ notation hides the $\log \epsilon^{-1}$ factor. The notations are recalled in Table~\ref{tab2}.
 \label{tab1}\medskip}
 \small
 \begin{tabular}{ccccc}
 Algorithm&LT&CC&TotalCom&TotalCom{=}UpCom when $\myc=0$ \\
 \hline
 \algn{DIANA}${}^{\!(a)}$&\xmark&\cmark&
 $\widetilde{\mathcal{O}}\!\left((1+\myc d+\frac{d+\myc d^2}{n})\kappa+d +\myc d^2\right)$&
 $\widetilde{\mathcal{O}}\!\left((1+\frac{d}{n})\kappa+d \right)$
 \\
   \algn{EF21}${}^{\!(b)}$&\xmark&\cmark&$\widetilde{\mathcal{O}}(d\kappa)$&$\widetilde{\mathcal{O}}(d\kappa)$\\
  \algn{Scaffold}&\cmark&\xmark&$\widetilde{\mathcal{O}}(d\kappa)$&$\widetilde{\mathcal{O}}(d\kappa)$\\
  \algn{FedLin}&\cmark&\xmark&$\widetilde{\mathcal{O}}(d\kappa)$&$\widetilde{\mathcal{O}}(d\kappa)$\\
  \algn{S-Local-GD}&\cmark&\xmark&$\widetilde{\mathcal{O}}(d\kappa)$&$\widetilde{\mathcal{O}}(d\kappa)$\\
 \algn{Scaffnew}&\cmark&\xmark&$\widetilde{\mathcal{O}}(d\sqrt{\kappa})$&$\widetilde{\mathcal{O}}(d\sqrt{\kappa})$\\
  \algn{5GCS}&\cmark&\xmark&$\widetilde{\mathcal{O}}(d\sqrt{\kappa})$&$\widetilde{\mathcal{O}}(d\sqrt{\kappa})$\\
 \algn{FedCOMGATE}&\cmark&\cmark&$\widetilde{\mathcal{O}}(d\kappa)$&$\widetilde{\mathcal{O}}(d\kappa)$\\
 \rowcolor{mylgreen}
\algno&\cmark&\cmark&$\widetilde{\mathcal{O}}\!\left(\sqrt{d}\sqrt{\kappa}+d\frac{\sqrt{\kappa}}{\sqrt{n}}+d+\sqrt{\myc}\,d\sqrt{\kappa}\right)$&$\widetilde{\mathcal{O}}\!\left(\sqrt{d}\sqrt{\kappa}+d\frac{\sqrt{\kappa}}{\sqrt{n}}+d\right)$\\
 \hline
 \end{tabular}
 \begin{flushleft}
$\qquad$\small $(a)$ using independent \texttt{rand}-1 compressors, for instance. Note that $\mathcal{O}(\sqrt{d}\sqrt{\kappa}+d)$ is better than $\mathcal{O}(\kappa+d)$ and $\mathcal{O}(d\frac{\sqrt{\kappa}}{\sqrt{n}}+d)$ is better than $\mathcal{O}(\frac{d}{n}\kappa+d)$, so that \algno has a better complexity than \algn{DIANA}.
 
 \noindent$\qquad$  $(b)$ using \texttt{top}-$k$ compressors with any $k$, for instance.
 \end{flushleft}
 \end{table*}

 \begin{table*}[t]
 \centering
 \caption{Summary of the main notations used in the paper.
 \label{tab2}\medskip}
 \begin{tabular}{cl}
 \hline
 LT & local training\\
 CC & communication compression\\
 PP&partial participation (a.k.a.\ client sampling)\\
 $\LL$&smoothness constant\\
$\mu$&strong convexity constant\\
$\kappa=\LL/\mu$& condition number of the functions\\
$d$ &dimension of the model\\
 $n$, $i$ &number and index of clients\\
$[n]=\{1,\ldots,n\}$ \\
$\myc$& weight on downlink communication (DownCom), see \eqref{eqtotcom}\\
$\sigma_i^2$, $\sigma^2\eqdef \sum_i \sigma_i^2$&variance of the stochastic gradients, see \eqref{eqvarb}\\
$c \in \{2,\ldots,n\}$& number of active clients (a.k.a.\ cohort size). Full participation if $c=n$\\
$\Omega \subset [n]$& index set of active clients\\
$s\in \{2,\ldots,c\}$&sparsity index for compression. No compression if $s=c$\\
$\mathbf{q}=(q_i)_{i=1}^c$&random binary mask for compression\\ 
$r$ &index of rounds\\
$\LS$, $\ell$ &number and index of local steps in a round\\
$p$& inverse of the expected number of local steps per round\\
$t$, $T$ &indexes of iterations\\
$\gamma$, $\eta$, $\chi$ &stepsizes\\
$x_i$ &local model estimate at client $i$\\
$h_i$ &local control variate tracking $\nabla f_i$\\
$\bar{x}^{(r)}$&model estimate at the server at round $r$\\
$\mrho$&convergence rate\\
 \hline
 \end{tabular}
 \end{table*}
 
 \section{Challenges and Contributions}\label{seccc}
 
 Let us examine the mathematical friction inherent in combining LT with both PP and CC, and outline our methodological contributions. Our notations are summarized in Table~\ref{tab2} for convenience.
 
 \subsection{The Friction Between LT and PP: Decoupling Primal and Dual Updates}
 
 With the breakthrough of \algn{Scaffnew} \citep{mis22}, LT is now understood to yield theoretical communication acceleration. However, integrating PP into this randomized framework is notoriously difficult; according to \citet{mal22b}, the authors of \algn{Scaffnew} \emph{``have tried---very hard in their own words---but their efforts did not bear any fruit.''} The challenge lies in the fact that PP must apply not only to the communication step itself, but to all preceding local computations. The naive approach—allowing active clients to proceed normally while idle clients freeze their local variables—destroys the variance reduction mechanism. \algno successfully overcomes this by decoupling the primal model update from the dual control variate update. A key design property of \algno is that \emph{only} the clients that actively participated in a given round utilize the newly broadcast global model to update their control variates (step 14). 
 By treating probabilistic communication and random client selection as two intertwined but distinct stochastic processes, \algno ensures that the sum of the control variates across all clients remains exactly zero at all times. This decoupling allows \algno to fully retain the $\mathcal{O}(\sqrt{\kappa})$ acceleration of LT, regardless of the participation level.
 
 \subsection{The Friction Between LT and CC: A Novel Two-Loop Structure}
 Combining LT and CC introduces a second severe challenge: controlling the variance of compression errors alongside the variance of client drift. Simply applying standard compressors (like \algn{DIANA} or \algn{EF21}) to \algn{Scaffnew} fails. In standard algorithms, the differences between local gradients and control variates are compressed, which works because gradients are evaluated at a shared global model. In an LT setting, local models drift apart, rendering these gradient differences unusable. Furthermore, compressing the difference between the drifted local model and the \emph{last known} global model ruins the benefits of LT because the global reference is too stale. Therefore, the local model estimates themselves must be compressed. While \algn{CompressedScaffnew} \citep{con26cs} achieves this using a permutation-based compression mechanism,  
 its loopless architecture strictly requires full participation. Non-participating clients perform no computation, missing the random communication triggers entirely. \algno resolves this architectural deadlock by introducing a novel \textbf{two-loop structure}. Unlike loopless algorithms where communication can trigger at any iteration, \algno explicitly defines a communication ``round'' as a sequence of local steps followed by synchronized, compressed communication. Client participation is determined at the \emph{beginning} of the round. This structure accommodates idle clients naturally, allowing \algno to function with any arbitrary level of PP (as few as $c=2$ clients per round) while safely utilizing the permutation-based compressors of \algn{CompressedScaffnew}. It remains an open question whether other types of compressors can be used in \algno, such as applying quantization on top of sparsification \citep{hor22, alb20}. Ultimately, \algno establishes the new state of the art for communication-efficient FL. For instance, using exact gradients under full participation, if $\myc$ is small and $n$ is large, its TotalCom complexity is
 \begin{equation*}\mathcal{O}\!\left(\left(\sqrt{d}\sqrt{\kappa}+d\right)\log \epsilon^{-1}\right),
 \end{equation*}
 demonstrating a simultaneous twofold acceleration: $\sqrt{\kappa}$ instead of $\kappa$ (via LT), and $\sqrt{d}$ instead of $d$ (via CC). Our general convergence result is formalized in Theorem~\ref{theo2}.

\begin{algorithm}[t]
	\caption{\algno}
	\label{alg0}
	\begin{algorithmic}[1]
		\STATE \textbf{input:}  stepsizes $\gamma>0$, $\eta >0$;  
		number of participating clients $c\in \{2,\ldots,n\}$; sparsity index for compression $s\in \{2,\ldots,c\}$;
		initial model estimate $\bar{x}^{(0)} \in \mathbb{R}^d$ at the server and initial control variates $h_1^{(0)}, \ldots, h_n^{(0)} \in \mathbb{R}^d$ at the clients, such that $\sum_{i=1}^n h_i^{(0)}=0$.
		\FOR{$r=0,1,\ldots$ (rounds)}
		\STATE choose a subset $\Omega^{(r)} \subset [n]$ of size 
		$c$
		 uniformly at random
		 \STATE choose the number of local steps $\LS^{(r)}\geq 1$ 
		\FOR{clients $i\in \Omega^{(r)}$, in parallel,}{}
	\STATE $x_i^{(r,0)}\eqdef \bar{x}^{(r)}$ (initialization received from the server)
	 \FOR{$\ell=0,\ldots,\LS^{(r)}-1$ (local steps)}{}
\STATE $x_i^{(r,\ell+1)}\eqdef x_i^{(r,\ell)} -\gamma g_i^{(r,\ell)}+\gamma h_i^{(r)}$, 
where $g_i^{(r,\ell)}$ is an unbiased stochastic estimate of $\nabla f_i\big(x_i^{(r,\ell)}\big)$ of variance $\sigma_i^2$
\ENDFOR
\ENDFOR
\STATE 
UpCom: 
the server and active clients agree on a random binary mask $\mathbf{q}^{(r)}=\big(q_i^{(r)}\big)_{i\in\Omega^{(r)}} \in \mathbb{R}^{d \times c}$,
and every client $i \in \Omega^{(r)}$ sends the compressed vector $\mathcal{C}_i^{(r)}\!\left(x_i^{(r,\LS^{(r)})}\right)$ to the server, where $\mathcal{C}^{(r)}_i(v)$ denotes $v$ multiplied elementwise by $q_i^{(r)}$.
\STATE $\bar{x}^{(r+1)}\eqdef \frac{1}{s}\sum_{i\in \Omega^{(r)}}\mathcal{C}_i^{(r)}\!\left(x_i^{(r,\LS^{(r)})}\right)$ (aggregation by the server)
\FOR{clients $i\in \Omega^{(r)}$, in parallel,}{}
\STATE $h_i^{(r+1)}\eqdef h_i^{(r)} + \frac{\eta}{\gamma}\Big(\mathcal{C}_i^{(r)}\!\left(\bar{x}^{(r+1)}\right)-\mathcal{C}_i^{(r)}\!\left(x_i^{(r,\LS^{(r)})}\right)\Big)$ \big($\bar{x}^{(r+1)}$ is received from the server\big)
\ENDFOR
\FOR{clients $i\notin \Omega^{(r)}$, in parallel,}{}
\STATE $h_i^{(r+1)}\eqdef h_i^{(r)}$ (the client is idle)
\ENDFOR
		\ENDFOR
	\end{algorithmic}
\end{algorithm}

 \section{Proposed Algorithm \algno}\label{sectamu}

The proposed algorithm \algno is outlined in Algorithm~\ref{alg0}. Its main loop iterates over communication rounds, indexed by $r$. Each round consists of an inner loop of local steps, indexed by $\ell$ and performed in parallel by the active clients, followed by compressed uplink communication with the server and a subsequent update of the local control variates $h_i$. The $c$ active (participating) clients are selected uniformly at random at the beginning of each round.

During UpCom, every active client sends a compressed version of its local model $x_i$, transmitting only a sparse subset of its coordinates. Specifically, the server and the active clients systematically agree upon a random binary mask $\mathbf{q}^{(r)}=(q_i^{(r)})_{i=1}^c \in \mathbb{R}^{d \times c}$. This mask is generated on the fly by randomly permuting the columns of a fixed binary template pattern designed to contain exactly $s \ge 2$ ones in every row. Consequently, each column vector $q_i^{(r)}$ dictates which coordinates client $i$ transmits. In effect, every client applies a \texttt{rand}-$k$ compressor, where $k \approx \frac{sd}{c}$. However, unlike standard approaches, these compressors are not independent. On the contrary, they are anti-correlated; the permutation-based design ensures that the sparse, compressed messages sent by the participating clients tightly complement one another. This structural dependency maintains a strict control over the variance after aggregation. We refer to \citet{con26cs} for a detailed illustration and the explicit construction rules of this specific compression mechanism.

At the end of the round, the server aggregates the received sparse vectors to form the global model estimate $\bar{x}^{(r+1)}$. This global model is then broadcast exclusively to the active clients, which use it to update their control variates $h_i$. Crucially, this update overwrites only the coordinates of $h_i$ that were actively involved in the communication process (i.e., the indices where the mask $q_i^{(r)}$ equals one). Indeed, the received vector $\bar{x}^{(r+1)}$ lacks the relevant aggregated information necessary to update $h_i$ at the uncommunicated coordinates.

The local model estimates $x_i$ at the clients are updated at the beginning of the round, when the active clients download the current global model estimate $\bar{x}^{(r)}$ to initialize their local steps. While the algorithm is written sequentially showing two DownCom steps from the server to the clients per round (steps 6 and 14), 
in practice, only one is required: $\bar{x}^{(r+1)}$ can be broadcast by the server at the end of round $r$ simultaneously to the active clients of round $r$ (to update their control variates) and the newly selected active clients of round $r+1$ (to initialize their local steps). We maintain the current sequential notation strictly for algorithmic clarity.

Consequently, clients indexed by $i\notin\Omega^{(r)}$ that do not participate in round $r$ remain completely idle. They perform no computations, initiate no communications, and their local control variates $h_i$ remain unchanged. 
Furthermore, idle clients are not required to store a local model estimate; they simply await the reception of the latest global model $x^{(r)}$ from the server upon their next activation.

In \algno, the local steps may utilize unbiased stochastic gradient estimates of bounded variance $\sigma_i^2$. That is, for every $i\in[n]$,
\begin{equation}
\Exp{g_i^{(r,\ell)}\;|\;x_i^{(r,\ell)}}=\nabla f_i\big(x_i^{(r,\ell)}\big),\quad \Exp{\sqnorm{g_i^{(r,\ell)}-\nabla f_i\big(x_i^{(r,\ell)}\big)}\;|\;x_i^{(r,\ell)}}\leq \sigma_i^2,\label{eqvarb}
\end{equation}
for some $\sigma_i\geq 0$ (if exact gradients are used, $\sigma_i=0$ and $g_i^{(r,\ell)}=\nabla f_i\big(x_i^{(r,\ell)}\big)$). We define the total variance as $\sigma^2\eqdef \sum_{i=1}^n \sigma_i^2$.

Our main theoretical result, establishing the linear convergence of \algno to the exact solution $x^\star$ of \eqref{eqpro1} (or to a neighborhood if $\sigma>0$), is stated below.

\begin{theorem}[fast linear convergence to a $\sigma^2$-neighborhood]\label{theo1}
Let $p\in (0,1]$. In \algno, suppose that at every round $r\geq 0$, $\LS^{(r)}$ is chosen randomly and independently according to a geometric law of mean $p^{-1}$; that is, for every $\LS\geq 1$,
$\mathrm{Prob}(\LS^{(r)}=\LS)=(1-p)^{\LS-1}p$. 
Also, suppose that
\begin{equation}
0<\gamma < \frac{2}{\LL}\label{eqc11}
\end{equation}
and $\eta \eqdef p\chi$, where
\begin{equation}
0<\chi\leq   \frac{n(s-1)}{s(n-1)} \in \left(\frac{1}{2},1\right].\label{eqc12}
\end{equation}
For every total number $t\geq 0$ of local steps made so far, define the Lyapunov function
\begin{equation}
\overline{\Psi}^t\eqdef  \frac{n}{\gamma}\sqnorm{\bar{x}^t-x^\star}+ \frac{\gamma }{p^2\chi}
\frac{n-1}{s-1}
\sum_{i=1}^n\sqnorm{h_i^{(r)}-h_i^\star},\label{eqlya1jf}
\end{equation}
where $x^\star$ is the unique solution to \eqref{eqpro1}, $h_i^\star = \nabla f_i(x^\star)$, 
$r\geq 0$ and $\ell\in \{0,\ldots,\LS^{(r)}-1\}$ are such that
\begin{equation*}
t = \sum_{\hat{r}=0}^{r-1} \LS^{(\hat{r})} + \ell, 
\end{equation*}
and
\begin{equation}
\bar{x}^t \eqdef \frac{1}{s}\sum_{i\in \Omega^{(r)}} \mathcal{C}_i^{(r)}\!\left(x_i^{(r,\ell)}\right).\label{fgsdsg}
\end{equation}
Then, for every $t\geq 0$,
\begin{equation}
\Exp{\overline{\Psi}^{t}}\leq \mrho^{t} \overline{\Psi}^0 + \frac{\gamma\sigma^2}{1-\mrho},\label{eqgrg}
\end{equation}
where 
\begin{equation*}
\mrho\eqdef  \max\left((1-\gamma\mu)^2,(\gamma \LL-1)^2,1-  p^2\chi \frac{s-1}{n-1}
\right)<1.
\end{equation*}
Also, if $\sigma=0$, $(\bar{x}^{(r)})_{r\ge 0}$ converges  to $x^\star$ and $(h_i^{(r)})_{r\ge 0}$ converges to $h_i^\star$, almost surely.
\end{theorem}

The complete proof is deferred to the Appendix. We provide a brief sketch here: we first analyze Algorithm~\ref{alg1}, a single-loop equivalent indexed by $t$ with one local step per iteration. In this variant, communication is triggered randomly with probability $p$ (akin to \algn{Scaffnew} and \algn{CompressedScaffnew}), computations occur globally, and PP only governs the communication step. We subsequently demonstrate how Theorem~\ref{theo3} regarding Algorithm~\ref{alg1} translates to the convergence guarantees for the two-loop \algno in Theorem~\ref{theo1}. Because the Lyapunov function strictly contracts per local iteration rather than per random-sized round, we meticulously re-index the local steps to express the rate as a function of $t$.

Note that in \eqref{fgsdsg}, $\bar{x}^t$ is practically computed only when $\ell=0$, yielding $\bar{x}^t=\bar{x}^{(r)}$. Furthermore, Theorem~\ref{theo1} depends explicitly on the sparsity index $s$ but not on the cohort size $c$. The dependence on $c$ is implicitly bounded by the requirement that $s \le c$.

\begin{remark}[setting $\eta$]
Under the conditions of Theorem~\ref{theo1}, one can safely set $\eta = \frac{p}{2}$ in \algno, rendering it independent of $n$ and $s$. However, to maximize the contraction rate, it is recommended to set
\begin{equation}
\eta=  p\frac{n(s-1)}{s(n-1)}.\label{eqeta}
\end{equation}
Also, as a practical heuristic, if $\LS$ is the average number of local steps per round, $p$ can be replaced by $\LS^{-1}$.
\end{remark}

It is worth examining the architectural difference between \algno and \algn{Scaffold} when CC is disabled ($s=c$). In \algno, the control variate $h_i$ is updated using the difference between the \emph{latest} global estimate $\bar{x}^{(r+1)}$ and the latest local estimate $x_i^{(r,\LS^{(r)})}$. Conversely, \algn{Scaffold} utilizes $\bar{x}^{(r)}-x_i^{(r,\LS^{(r)})}$, inherently relying on the ``stale'' global estimate $\bar{x}^{(r)}$. Furthermore, \algn{Scaffold} scales this difference by the number of local steps, artificially shrinking the update size. This structural limitation explains why \algn{Scaffold} intrinsically fails to extract any theoretical communication acceleration from LT, regardless of the number of local steps taken.

Finally, we note that the asymptotic neighborhood size in \eqref{eqgrg} does not exhibit linear speedup (i.e., it does not diminish as $n$ increases). The precise dynamics of LT combined with stochastic gradients remain poorly understood \citep{woo20}. We hypothesize this issue should be analyzed under the broader umbrella of variance reduction \citep{mal22}, which falls outside the scope of this paper's primary focus on communication efficiency.

\subsection{Iteration and Communication Complexities}

In this section, we assume access to exact gradients ($\sigma=0$).\footnote{If $\sigma>0$, one can derive sublinear rates to reach an $\epsilon$-neighborhood by setting $\gamma$ proportional to $\epsilon$, analogous to the approach taken for \algn{Scaffnew} in \citet[Corollary 5.6]{mis22}.} We focus on this deterministic setting to clearly isolate the fundamental algorithmic mechanics and strictly establish the new state of the art for communication complexity, regardless of the underlying local computation noise. We place ourselves under the conditions of Theorem~\ref{theo1}.

We first note that \algno perfectly matches the iteration complexity of standard \algn{GD}, achieving the rate $\mrho^\sharp\eqdef \max(1-\gamma\mu,\gamma \LL-1)^2$, provided $p$ and $s$ are large enough to satisfy $1- \chi p^2 \frac{s-1}{n-1} \leq \mrho^\sharp$. This demonstrates a highly desirable property: incorporating LT, CC, and PP does not harm the baseline convergence rate at all, up to a specific threshold.

Let us formalize the number of iterations (i.e., the total number of local steps) required to reach $\epsilon$-accuracy, defined as $\Exp{\overline{\Psi}^{t}}\leq \epsilon$. For any $s\geq 2$, $p\in(0,1]$, $\gamma =\Theta(\frac{1}{\LL})$, and $\chi =\Theta(1)$, the iteration complexity of \algno is
$\mathcal{O}\left(\left(\kappa + \frac{n }{s p^2}\right)\log \epsilon^{-1}\right)$. 
By optimizing the communication probability $p$ as
\begin{equation}
p = \min\left(\Theta\left(\sqrt{\frac{n}{s\kappa}}\right),1\right),\label{eqc2a1}
\end{equation}
which implies that the expected number of local steps per round is
\begin{equation}
\Exp{\LS^{(r)}}=\max\left(\Theta\left(\sqrt{\frac{s\kappa}{n}}\right),1\right),\label{eqavls1}
\end{equation}
the iteration complexity simplifies to
\begin{equation*}
\mathcal{O}\left(\left(\kappa + \frac{n}{s}\right)\log\epsilon^{-1}\right).
\end{equation*}

We now turn our attention to the communication complexity. Communication is triggered at each iteration with probability $p$. During every communication round, DownCom broadcasts the full $d$-dimensional global vector $\bar{x}^{(r)}$. In contrast, UpCom strictly applies compression, meaning the number of floats sent in parallel by the active clients equals the number of ones per column in the sampling mask $\mathbf{q}$, which is at most $\lceil \frac{sd}{c}\rceil \geq 1$. Consequently, the respective communication complexities are:
\begin{align*}
\mbox{DownCom:}&\quad \mathcal{O}\left(pd\left(\kappa + \frac{n}{s p^2}\right)\log\epsilon^{-1}\right),\\
\mbox{UpCom:}&\quad \mathcal{O}\left(p\left(\frac{sd}{c}+1\right)\left(\kappa + \frac{n}{s p^2}\right)\log\epsilon^{-1}\right),\\
\mbox{TotalCom:}&\quad  \mathcal{O}\left(p\left(\frac{sd}{c}+1+\myc d\right)\left(\kappa + \frac{n}{s p^2}\right)\log\epsilon^{-1}\right).
\end{align*}
For any given sparsity level $s$, the optimal choice for $p$ to minimize both UpCom and DownCom remains \eqref{eqc2a1}. Substituting this yields 
\begin{equation*}
\mathcal{O}\left(p\left(\kappa + \frac{n}{s p^2}\right)\right)= \mathcal{O}\left(\sqrt{\frac{n\kappa}{s}}+\frac{n}{s}\right),
\end{equation*}
resulting in a TotalCom complexity of:
\begin{equation*}
\mbox{TotalCom:}\quad \mathcal{O}\left(\!\left(\sqrt{\frac{n\kappa}{s}}+\frac{n}{s}\right)\!\left(\frac{sd}{c}+1+\myc d\right)\log\epsilon^{-1}\!\right).
\end{equation*}
Here, we clearly observe the first acceleration effect driven by LT: by utilizing a suitable $p<1$, the communication complexity depends only on $\sqrt{\kappa}$, rather than $\kappa$, irrespective of the participation level $c$ or compression level $s$.

If compression is disabled ($s=c$), the TotalCom complexity becomes 
\begin{equation*}
\mathcal{O}\left(d\left(\sqrt{\frac{n\kappa}{c}}+\frac{n}{c}\right)\log\epsilon^{-1}\!\right).
\end{equation*}
However, by dynamically tuning $s$, we can trigger a second acceleration effect, minimizing the TotalCom complexity to establish our primary theoretical result:

\begin{theorem}[doubly accelerated communication]\label{theo2}
In the conditions of Theorem~\ref{theo1}, suppose that $\sigma=0$, $\gamma =\Theta(\frac{1}{\LL})$, $\chi =\Theta(1)$,
and the algorithm parameters are set as
\begin{equation}
p = \min\left(\Theta\left(\sqrt{\frac{n}{s\kappa}}\right),1\right),\quad s = \max\left(2,\left\lfloor \frac{c}{d}\right\rfloor,\lfloor \myc c\rfloor \right).\label{eqs}
\end{equation}
Then, the TotalCom complexity of \algno is:
\begin{equation}
\colorbox{yel}{$\displaystyle
\mathcal{O}\!\left(\left(
\sqrt{d}\sqrt{\kappa}\sqrt{\frac{n}{c}}+d\sqrt{\kappa}\frac{\sqrt{n}}{c}+d\frac{n}{c}+\sqrt{\myc}\,d\sqrt{\kappa}\sqrt{\frac{n}{c}}
\right)
\log\epsilon^{-1}\right)$}.\label{eqbt}
\end{equation}
\end{theorem}

As summarized in Tables \ref{tab1c} and \ref{tab1}, this doubly-accelerated complexity establishes a new state of the art for distributed optimization. The precise behavior of this bound under varying network asymmetries and participation regimes is detailed in the following corollaries.

\begin{corollary}[dependence on $\myc$]
As long as the downlink penalty is sufficiently small, specifically $\myc\leq \max(\frac{2}{c},\frac{1}{d},\frac{n}{\kappa c})$, the TotalCom complexity is dominated by UpCom, identical to the $\myc=0$ case:
\begin{equation*}
\mathcal{O}\!\left(\left(\sqrt{d}\sqrt{\kappa}\sqrt{\frac{n}{c}}+d\sqrt{\kappa}\frac{\sqrt{n}}{c}+d\frac{n}{c}\right)\log\epsilon^{-1}\right).
\end{equation*}
Conversely, if $\myc\geq \max(\frac{2}{c},\frac{1}{d},\frac{n}{\kappa c})$, the downlink cost becomes the bottleneck and the complexity becomes
\begin{equation*}
\mathcal{O}\!\left(\sqrt{\myc}d\sqrt{\kappa}\sqrt{\frac{n}{c}}\log\epsilon^{-1}\right).
\end{equation*}
Crucially, compression remains effective in this regime, evidenced by the dependence on $\sqrt{\myc}$. 
It is only in the perfectly symmetric, uncompressed regime ($\myc=1 \implies s=c$) that the UpCom, DownCom, and TotalCom complexities converge to
\begin{equation*}
\mathcal{O}\!\left(d\sqrt{\kappa}\sqrt{\frac{n}{c}}\log\epsilon^{-1}\right).
\end{equation*}
Thus, even under full participation ($c=n$), \algno systematically outperforms \algn{Scaffnew} for every asymmetry weight $\myc\in [0,1]$.
\end{corollary}

\begin{corollary}[full participation] 
In the limiting case of full client participation ($c=n)$, the TotalCom complexity of \algno simplifies to
\begin{equation*}
\mathcal{O}\!\left(\left(\sqrt{d}\sqrt{\kappa}+d\frac{\sqrt{\kappa}}{\sqrt{n}}+d+\sqrt{\myc}\,d\sqrt{\kappa}\right)\log\epsilon^{-1}\right).
\end{equation*}
\end{corollary}

\begin{figure*}[t]
	\centering
	\hspace*{-1.7mm}\begin{tabular}{cc}
	\includegraphics[scale=0.21]{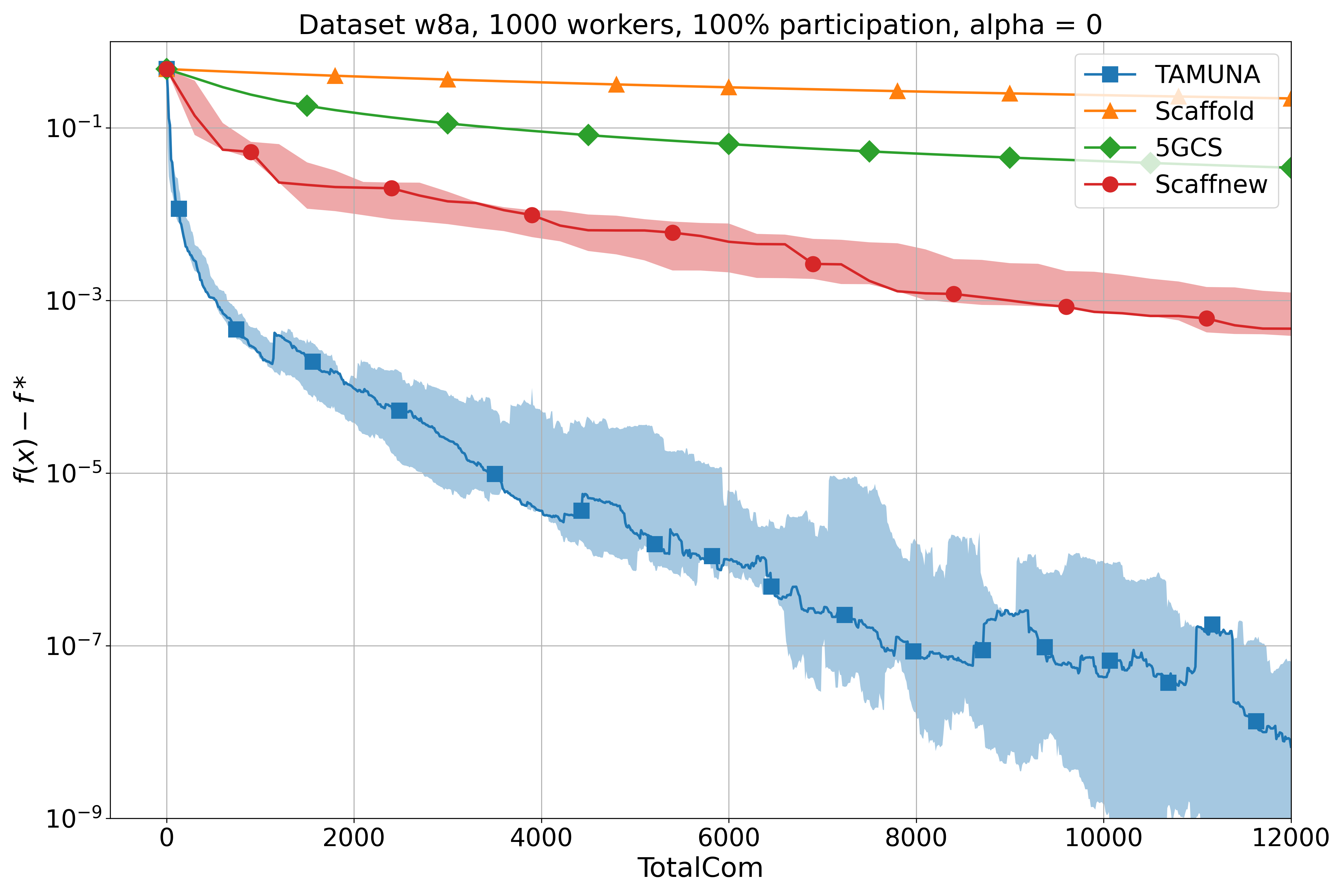}&\!\!\!\!\!\includegraphics[scale=0.21]{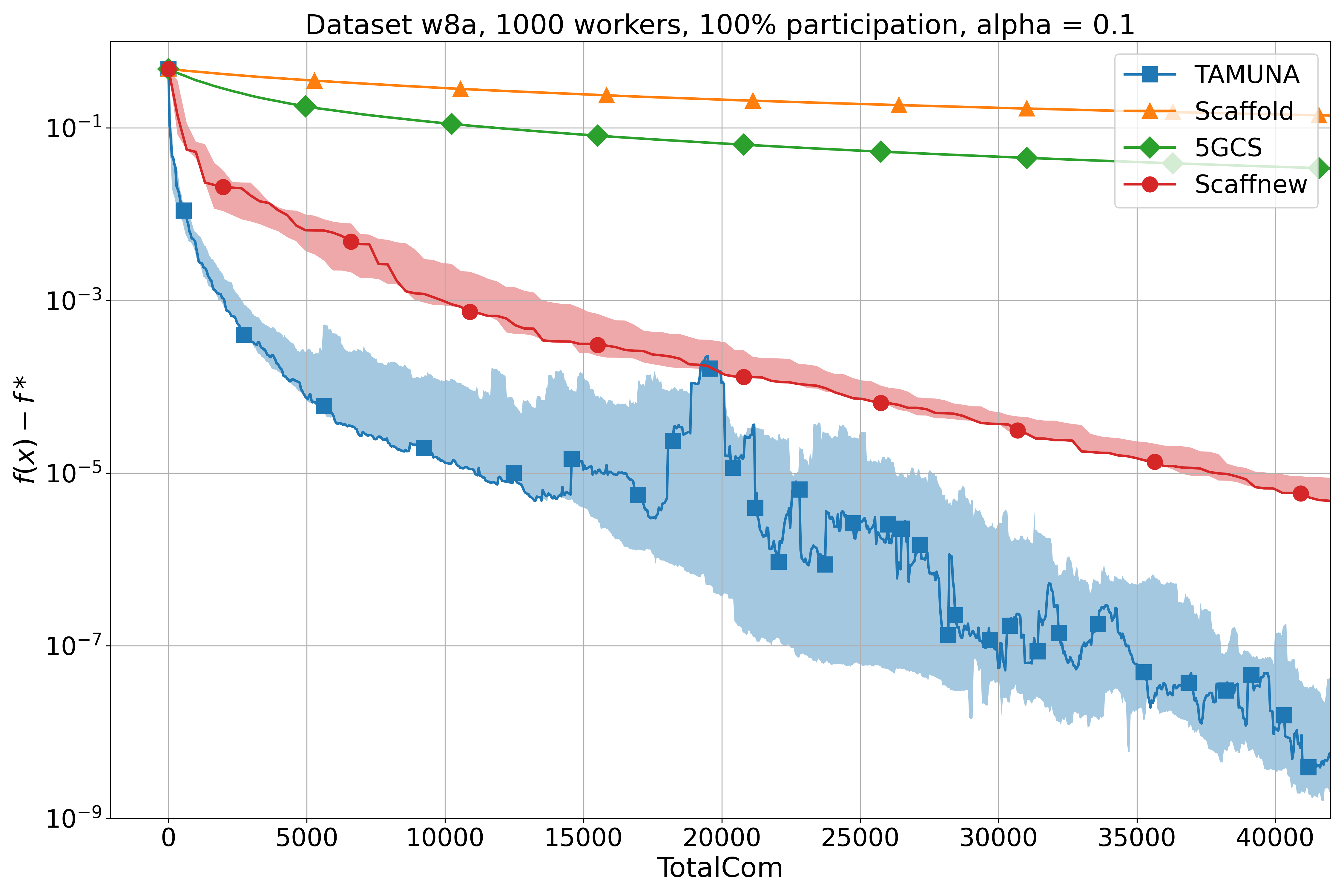}\\
	(a) w8a, $n=1000$, $\alpha=0$, $c=n$&(b) w8a, $n=1000$, $\alpha = 0.1$, $c=n$\\
	\includegraphics[scale=0.21]{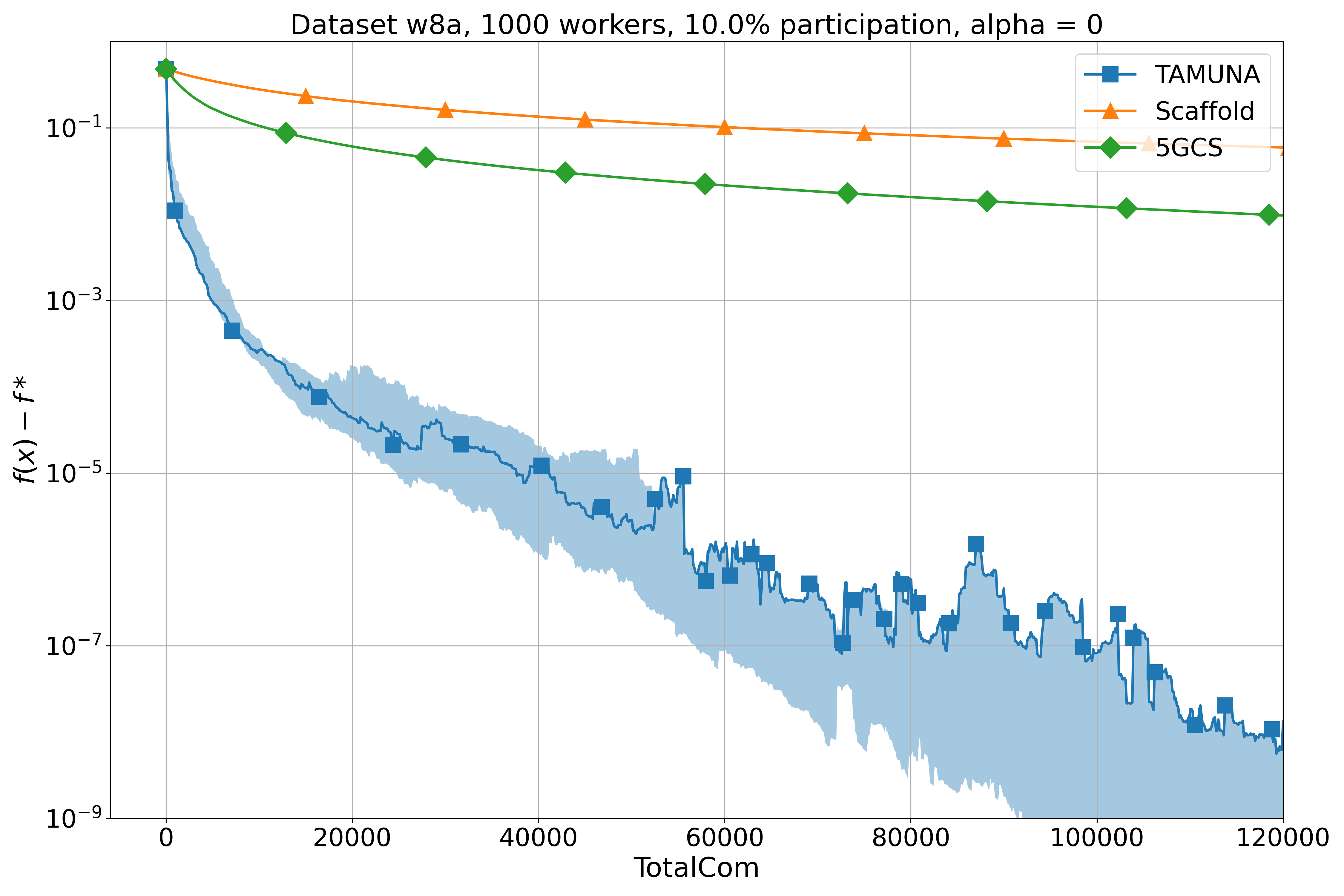}&\!\!\!\!\!\includegraphics[scale=0.21]{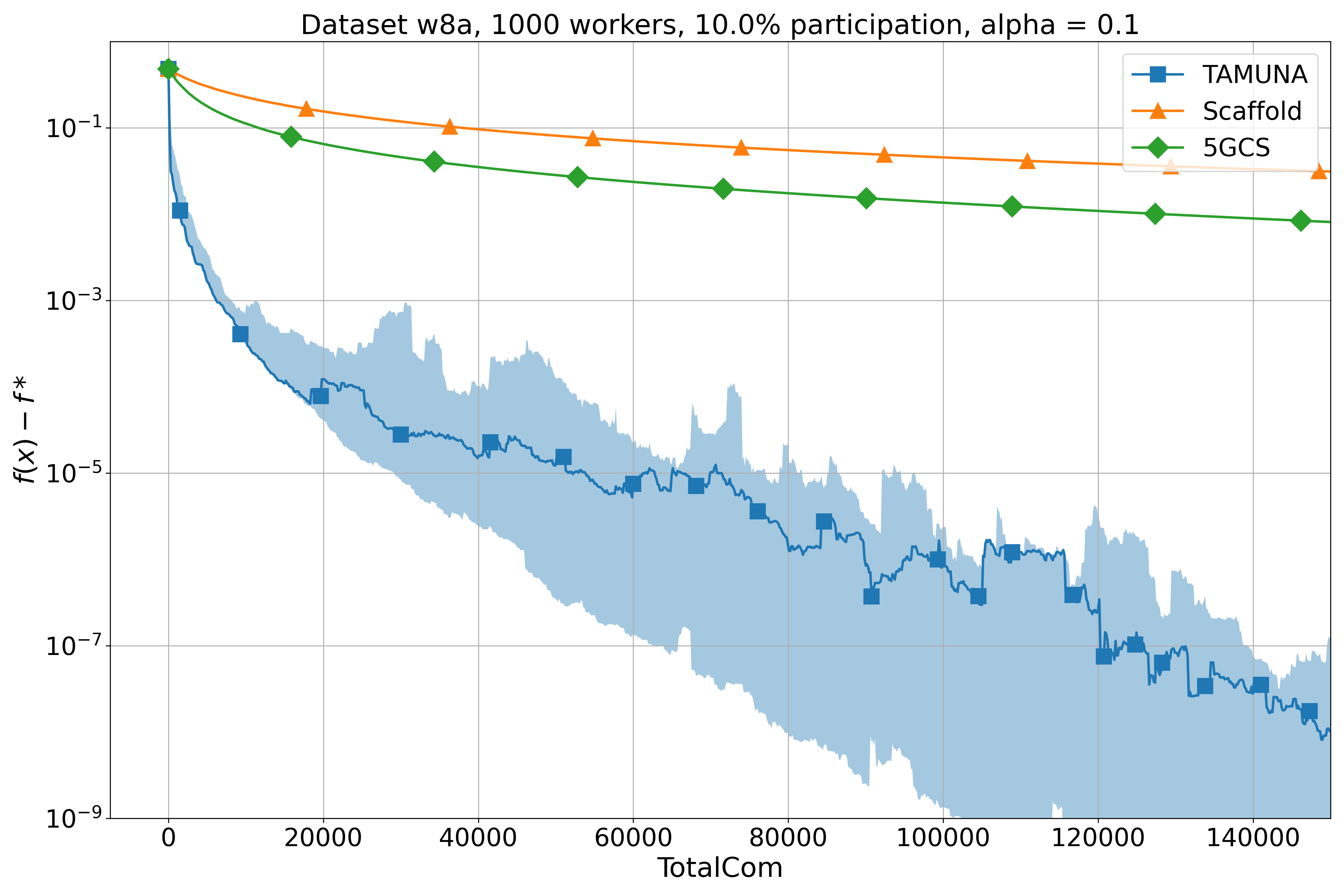}\\
	(c) w8a, $n=1000$, $\alpha=0$, $c=0.1n$&(d) w8a, $n=1000$, $\alpha=0.1$, $c=0.1n$
	\end{tabular}
	\caption{Logistic regression experiment in the case $n>d$. The dataset w8a has $d=300$ features and $n=1000$, so $n \approx 3d$. The first row shows a comparison in the full participation regime, while the second row shows a comparison in the partial participation regime with 10\% of clients. On the left, $\myc=0$, while on the right, $\myc=0.1$.
	\label{fig:totalcostw8a}}
\end{figure*}

\begin{figure*}[t]
	\centering
	\hspace*{-1.5mm}\begin{tabular}{cc}
		\includegraphics[scale=0.21]{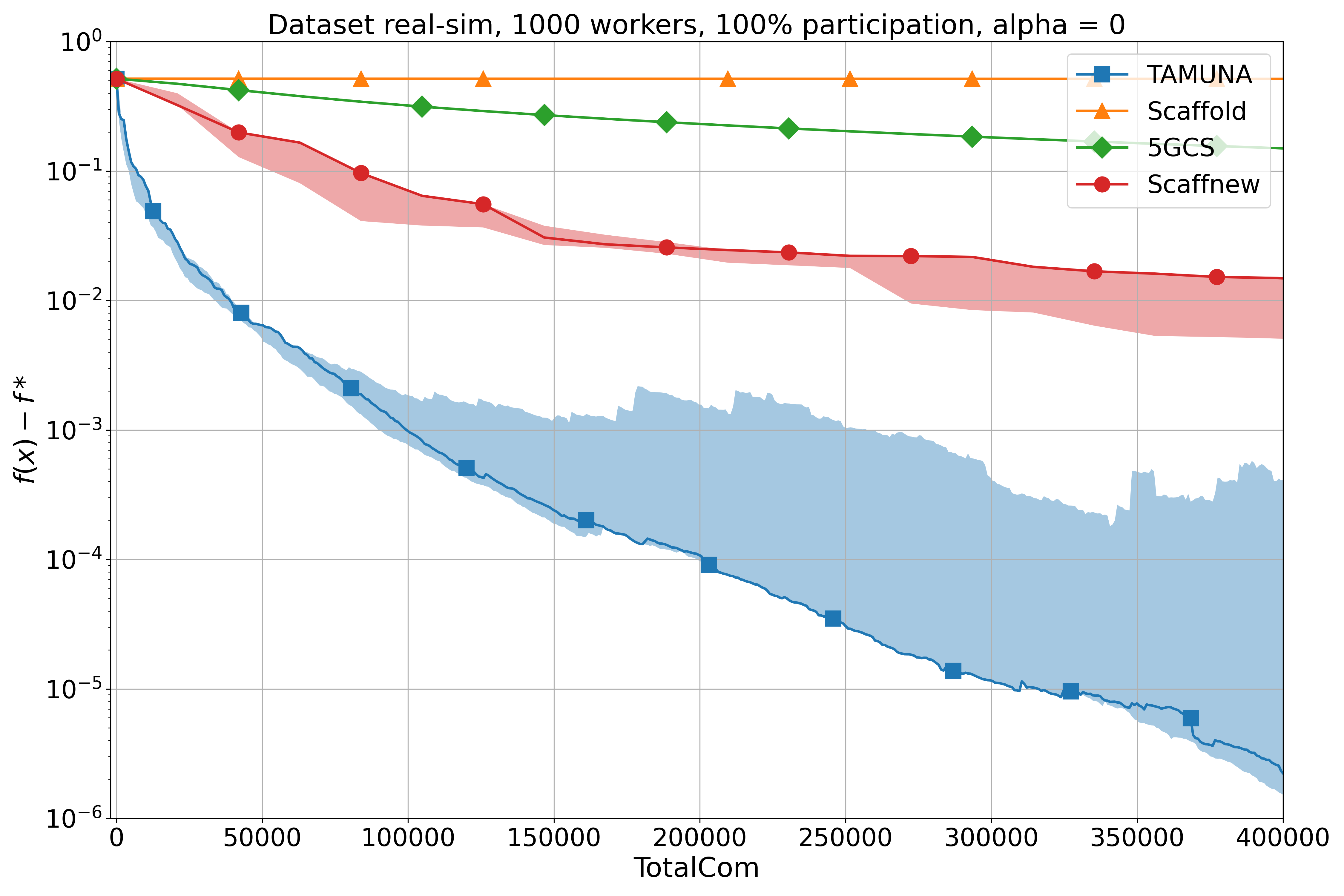}&\!\!\!\!\includegraphics[scale=0.21]{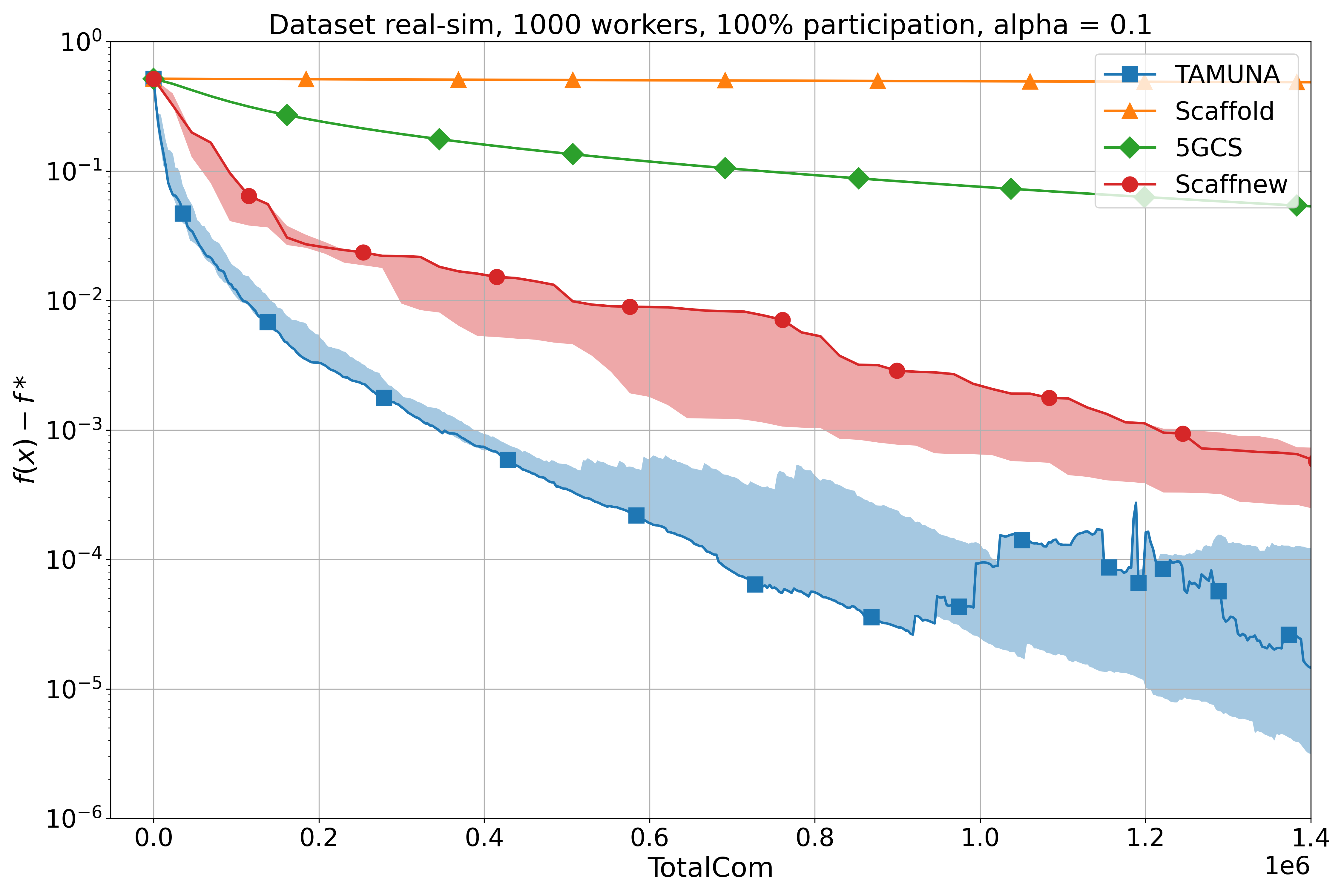}\\
		(a) real-sim, $n=1000$, $\alpha=0$, $c=n$&(b) real-sim, $n=1000$, $\alpha = 0.1$, $c=n$\\
		\includegraphics[scale=0.21]{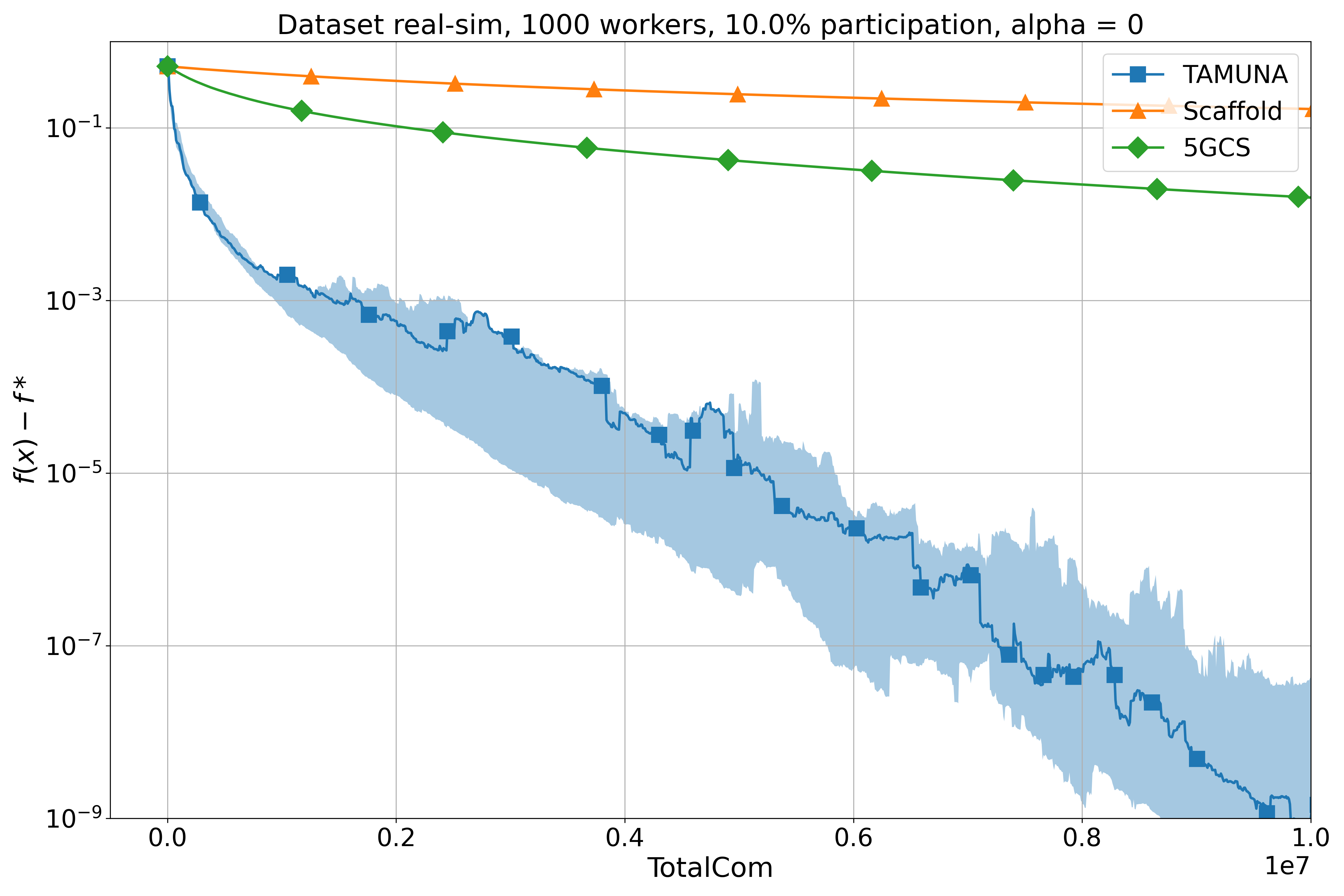}&\!\!\!\!\!\includegraphics[scale=0.21]{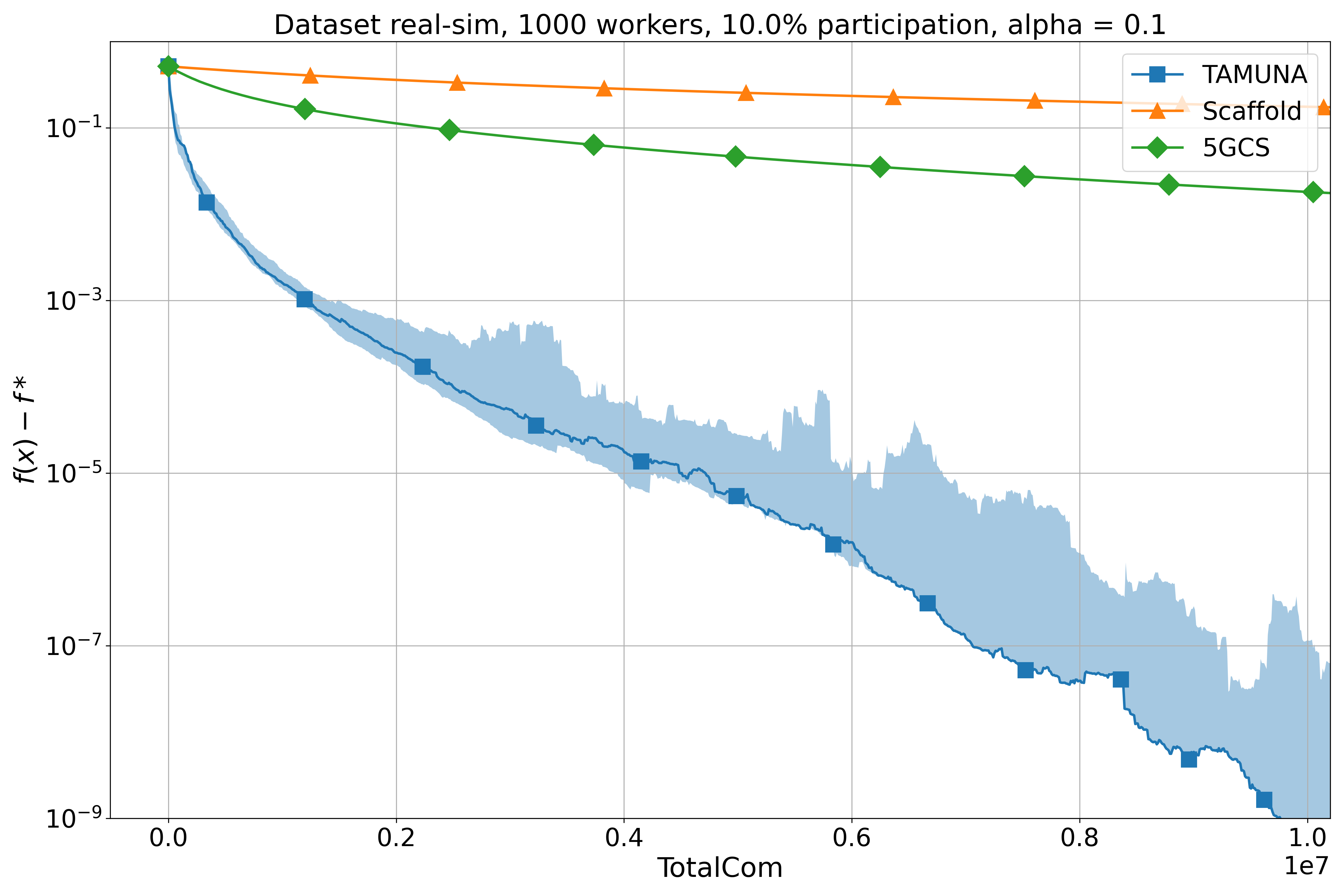}\\
		(c) real-sim, $n=1000$, $\alpha=0$, $c=0.1n$&(d) real-sim, $n=1000$, $\alpha=0.1$, $c=0.1n$
	\end{tabular}
	\caption{Logistic regression experiment in the case $d>n$. The dataset real-sim has $d=20,958$ features and $n=1000$, so $n \approx d/20$. The first row shows a comparison in the full participation regime, while the second row shows a comparison in the partial participation regime with 10\% of clients. On the left, $\myc=0$, while on the right, $\myc=0.1$.
	\label{fig:totalcostrealsim}}
\end{figure*}

\section{Experiments}\label{secexpe}

We empirically validate our theoretical findings by evaluating \algno on a distributed logistic regression problem. The global objective function is 
\begin{equation}
	f(x) = \frac{1}{M}\sum_{m=1}^M \left(\log\!\left(1 + \exp\!\left(-b_m a_m^{\top}x\right)\right) + \frac{\mu}{2}\| x \|^2\right),\label{eq13}
\end{equation}
where $(a_m, b_m) \in \mathbb{R}^d \times \{-1, 1\}$ represent the feature vectors and corresponding labels, respectively, and $M$ denotes the total number of samples. The global dataset is partitioned uniformly across $n$ clients to construct the local functions $f_i$ (discarding any minor remainder to ensure equal data distribution). To rigorously benchmark the empirical convergence, 
we tune the $\ell_2$-regularization parameter $\mu$ such that the global condition number is $\kappa = 10^4$.

We simulate $n=1000$ clients and investigate two distinct regimes using standard datasets from the LIBSVM library \citep{CC01a}: A high-dimensional regime ($d > n$) using the \texttt{real-sim} dataset ($d = 20,958$), and a low-dimensional regime ($n > d$) using the \texttt{w8a} dataset ($d = 300$). For each dataset, we evaluate the algorithmic performance under two specific communication asymmetry scenarios: a strictly UpCom-bottlenecked regime ($\myc=0$) and a more realistic, moderately asymmetric regime ($\myc=0.1$), where $\myc$ dictates the downlink penalty in the TotalCom model defined in \eqref{eqtotcom}.

We track the functional suboptimality gap $f(x)-f(x^\star)$ as a function of TotalCom complexity (i.e., the total number of real values transmitted per client). Here, $x$ denotes the current global model aggregated at the server, which corresponds to $\bar{x}^{(r)}$ for \algno. Because $f$ is $\LL$-smooth, the functional suboptimality is strictly bounded by $\frac{\LL}{2}\sqnorm{x-x^\star}$. Consequently, this objective error converges linearly at the same theoretical rate as the Lyapunov function $\Psi$ analyzed in Theorem~\ref{theo1}, making it the most direct and natural metric for comparing algorithmic efficiency.

We benchmark \algno against three established variance-reduced baseline algorithms. To specifically test robustness to client dropout, we compare against \algn{Scaffold} (the standard unaccelerated LT algorithm with PP) and \algn{5GCS} (the only other known accelerated algorithm supporting LT and PP). We evaluate these across two participation scenarios: full participation ($c=n$) and a realistic dropout regime with strictly 10\% active client participation ($c=0.1n$). Finally, in the full participation scenario, we additionally include \algn{Scaffnew} to clearly demonstrate how \algno's novel architecture and compression mechanisms improve upon the prior state of the art.

To strictly satisfy the theoretical conditions guaranteeing linear convergence, we set the stepsizes $\gamma$ and $\eta$ for \algno as:
\begin{equation*}
	\gamma = \frac{2}{\LL+\mu}, \quad \eta=  p\frac{n(s-1)}{s(n-1)},
\end{equation*}
while the remaining parameters $s$ (sparsity index) and $p$ (communication probability) are fine-tuned to optimize the overall communication complexity. Empirically, we observed the best performance trade-offs by setting $s=40$ and $p=0.01$. Because these values strictly adhere to the bounds established in Theorem~\ref{theo1}, the linear convergence of \algno is theoretically guaranteed. To ensure a rigorous and fair comparison, we adopt the same values of $\gamma$ and $p$ for \algn{Scaffnew}. For \algn{Scaffold}, we fix the number of local steps to $p^{-1}$, perfectly matching the expected number of local steps per round in both \algno and \algn{Scaffnew} (we observed that alternative values for \algn{Scaffold} yielded marginal changes). We also tune \algn{Scaffold}'s stepsize $\gamma$ to the highest empirical value that safely avoids divergence. For \algn{5GCS}, we grid-searched the parameters 
to obtain its best performance. 
Across all evaluated algorithms, the primal models, as well as the dual control variates in \algno, \algn{Scaffnew}, and \algn{Scaffold}, are globally initialized as zero vectors.

The convergence trajectories are plotted in Figures \ref{fig:totalcostw8a} and \ref{fig:totalcostrealsim}. To account for algorithmic variance, we execute multiple independent runs across different random seeds.
The shaded regions in the plots denote the min-max convergence gap across these independent seeds, while the solid, marked lines trace the progress of the first run.

\textbf{Discussion of Results:} As clearly illustrated in the figures, our proposed \algno systematically outperforms all baseline methods. Under full participation, \algn{Scaffnew} beats both \algn{Scaffold} and \algn{5GCS}, validating the theoretical superiority of its LT mechanism. While \algno embeds and benefits from this exact same LT mechanism, it outperforms \algn{Scaffnew} due to its secondary acceleration engine, compression. As predicted by our theory, the performance gap between \algno and \algn{Scaffnew} is most pronounced when uplink communication is the strict bottleneck ($\myc=0$) and naturally narrows when $\myc$ increases.

\section{Conclusion}

We introduced \algno, the first communication-efficient distributed optimization algorithm to successfully intertwine partial participation with the combined acceleration mechanisms of local training and compression. By decoupling the primal and dual updates, \algno provably achieves a doubly-accelerated communication complexity, for any level of client participation.
Furthermore, we establish these guarantees not merely for uplink transmissions, but within a comprehensive, asymmetric model of total communication (TotalCom). As corroborated by our experimental evaluations, these theoretical advantages translate directly to practice, allowing \algno to reach target accuracies with strictly less communication overhead than existing methods. 

Important avenues for future work include generalizing our specific permutation-based compression mechanism to accommodate a broader class of generic compressors, including quantization  \cite{hor22d,con25}. Additionally, implementing internal variance reduction for local stochastic gradients, analogous to the extensions developed for \algn{Scaffnew} \cite{mal22}, is a promising direction.

\section*{Acknowledgement}

This work was supported by funding from King Abdullah University of Science and Technology (KAUST):\\
i) KAUST Baseline Research Scheme,\\
ii) Center of Excellence for Generative AI (award no.\ 5940),\\
iii) Competitive Research Grant (CRG) Program (award no.\ 6460),\\
iv) SDAIA-KAUST Center of Excellence in Data Science and Artificial Intelligence (SDAIA-KAUST AI).

\bibliography{IEEEabrv,biblio}

\clearpage
\appendix

\noindent{\huge \textbf{Appendix}}

\section{Proof of Theorem~\ref{theo1}}\label{secalg2}

We first prove convergence of Algorithm~\ref{alg1}, which is a single-loop version of \algno; that is, there is a unique loop over the iterations and there is one local step per iteration. 
 In Section~\ref{sectam}, we show that this yields a proof of Theorem~\ref{theo1} for \algno. We can note that in case of full participation ($c=n$, $\Omega^t\equiv [n]$), Algorithm~\ref{alg1} reverts to \algn{CompressedScaffnew} \citep{con26cs}.

To simplify the analysis of Algorithm~\ref{alg1}, we introduce vector notations: the problem \eqref{eqpro1} can be written as
\begin{equation}
\mathrm{find}\ \mathbf{x}^\star =\argmin_{\mathbf{x}\in\mathcal{X}}\  \mathbf{f}(\mathbf{x})\quad\mbox{s.t.}\quad W\mathbf{x}=0,\label{eqpro2}
\end{equation}
where $\mathcal{X}\eqdef\mathbb{R}^{d\times n}$, an element 
$\mathbf{x}=(x_i)_{i=1}^n \in \mathcal{X}$ is a collection of vectors $x_i\in \mathbb{R}^d$, $\mathbf{f}:\mathbf{x}\in \mathcal{X}\mapsto \sum_{i=1}^n f_i(x_i)$ is $\LL$-smooth and $\mu$-strongly convex, the linear operator $W:\mathcal{X}\rightarrow \mathcal{X}$ maps $\mathbf{x}=(x_i)_{i=1}^n $ to $(x_i-\frac{1}{n}\sum_{j=1}^n x_j)_{i=1}^n$. 
The constraint $W\mathbf{x}=0$ means that $\mathbf{x}$ minus its average is zero; that is, $\mathbf{x}$ has identical components $x_1 = \cdots = x_n$. Thus, \eqref{eqpro2} is indeed equivalent to \eqref{eqpro1}. We have $W=W^*=W^2$.

\begin{algorithm}[t]
	\caption{}
	\label{alg1}
	\begin{algorithmic}[1]
		\STATE \textbf{input:}  stepsizes $\gamma>0$, $\chi >0$; probability $p \in (0,1]$; number of participating clients $c\in \{2,\ldots,n\}$; compression index $s\in \{2,\ldots,c\}$;
initial estimates $x_1^0,\ldots,x_n^0 \in \mathbb{R}^d$ and $h_1^0, \ldots, h_n^0 \in \mathbb{R}^d$ such that $\sum_{i=1}^n h_i^0=0$, 
		sequence of independent coin flips $\theta^0,\theta^1,\ldots$  
		with $\mathrm{Prob}(\theta^t=1)= p$, 
		and for every $t$ with $\theta^t=1$, a subset $\Omega^t\subset [n]$ of size $c$ chosen uniformly at random and 
		a random binary mask $\mathbf{q}^t=(q_i^t)_{i\in\Omega^t} \in \mathbb{R}^{d \times c}$.  
		The compressed vector $\mathcal{C}^t_i(v)$ is $v$ multiplied elementwise by $q_i^t$.
		\FOR{$t=0,1,\ldots$}
		\FOR{$i=1,\ldots,n$, at clients in parallel,}
\STATE $\hat{x}_i^t\eqdef x_i^t -\gamma g_i^t+ \gamma h_i^t$, where $g_i^t$ is an unbiased stochastic estimate of $\nabla f_i(x_i^t)$ of variance $\sigma_i^2$
\IF{$\theta^t=1$}
\IF{$i\in\Omega^t$}
\STATE send $\hat{x}_i^t$ to the server, which aggregates $\bar{x}^t\eqdef \frac{1}{s}\sum_{j\in\Omega^t}  \mathcal{C}_j^t(\hat{x}_{j}^t)$ and broadcasts it to all clients%
\STATE $h_i^{t+1}\eqdef h_i^t + \frac{p\chi}{\gamma}\big(\mathcal{C}_i^t(\bar{x}^{t})-\mathcal{C}_i^t(\hat{x}_i^t)\big)$
\ELSE 
\STATE $h_i^{t+1}\eqdef h_i^t$
\ENDIF
\STATE $x_i^{t+1}\eqdef \bar{x}^{t}$
\ELSE
\STATE $x_i^{t+1}\eqdef \hat{x}_i^t$
\STATE $h_i^{t+1}\eqdef h_i^t$
\ENDIF
\ENDFOR
		\ENDFOR
	\end{algorithmic}
\end{algorithm}

\begin{algorithm}[t]
	\caption{\label{alg2}}
		\begin{algorithmic}
			\STATE  \textbf{input:} stepsizes $\gamma>0$, $\chi>0$; probability $p \in (0,1]$, parameter $\omega\geq 0$; number of participating clients $c\in \{2,\ldots,n\}$; compression index $s\in \{2,\ldots,c\}$;
			initial estimates $\mathbf{x}^0\in\mathcal{X}$ and $\mathbf{h}^0\in\mathcal{X}$ such that $\sum_{i=1}^n h_i^0=0$;
		sequence of independent coin flips $\theta^0,\theta^1,\ldots$  with $\mathrm{Prob}(\theta^t=1)= p$, 
		and for every $t$ with $\theta^t=1$, a subset $\Omega^t \subset [n]$ of size $c$ chosen uniformly at random and 
		a random binary mask $\mathbf{q}^t=(q_i^t)_{i\in\Omega^t} \in \mathbb{R}^{d \times c}$. 
		The compressed vector $\mathcal{C}^t_i(v)$ is $v$ multiplied elementwise by $q_i^t$.
			\FOR{$t=0, 1, \ldots$}
			\STATE $\mathbf{\hat{x}}^{t} \eqdef  \mathbf{x}^t -\gamma \mathbf{g}^t 
			+ \gamma \mathbf{h}^t$, where $\mathbf{g}^t=(g_i^t)_{i=1}^n\approx \nabla \mathbf{f}(\mathbf{x}^t)$
			\IF{$\theta^t=1$}
			\STATE $\mathbf{\bar{x}}^{t} \eqdef (\bar{x}^t)_{i=1}^n$, where $\bar{x}^t\eqdef \frac{1}{s}\sum_{j\in\Omega^t} \mathcal{C}_j^t(\hat{x}_{j}^t )$
			\STATE $\mathbf{x}^{t+1}\eqdef \mathbf{\bar{x}}^{t}$
			\STATE $\mathbf{d}^t\eqdef   (d_i^t)_{i=1}^n$ with $d_i^t =  \left\{ \begin{array}{l}
			(1+\omega) \left(\mathcal{C}_i^t(\hat{x}_{i}^t) -\mathcal{C}_i^t(\bar{x}^t)\right)\ \mbox{if }  i\in \Omega^t,\\
			0 \ \mbox{otherwise}\end{array}\right.$
			\ELSE
			\STATE $\mathbf{x}^{t+1}\eqdef \mathbf{\hat{x}}^{t}$
			\STATE $\mathbf{d}^t\eqdef 0$
			\ENDIF
			\STATE $\mathbf{h}^{t+1}\eqdef \mathbf{h}^t -\frac{p\chi}{\gamma(1+\omega)}\mathbf{d}^t$
			\ENDFOR
		\end{algorithmic}
	\end{algorithm}

We also rewrite Algorithm~\ref{alg1} using vector notations as Algorithm~\ref{alg2}.
It converges linearly:

\begin{theorem}[fast linear convergence]\label{theo3}
In Algorithm~\ref{alg2},
suppose that $0<\gamma < \frac{2}{\LL}$, $0<\chi\leq \frac{n(s-1)}{s(n-1)}$, $\omega = \frac{n-1}{p (s-1)}  -1$.
For every $t\geq 0$, define the Lyapunov function
\begin{equation}
\Psi^{t}\eqdef  \frac{1}{\gamma}\sqnorm{\mathbf{x}^{t}-\mathbf{x}^\star}+ \frac{\gamma(1+\omega)}{p\chi}\sqnorm{\mathbf{h}^{t}-\mathbf{h}^\star},\label{eqlya1j}
\end{equation}
where $\mathbf{x}^\star$ is the unique solution to \eqref{eqpro2} and $\mathbf{h}^\star \eqdef \nabla \mathbf{f}(\mathbf{x}^\star)$. Then 
Algorithm~\ref{alg2}  
converges linearly:  for every $t\geq 0$, 
\begin{equation*}
\Exp{\Psi^{t}}\leq \mrho^t \Psi^0 + \frac{\gamma\sigma^2}{1-\mrho},
\end{equation*}
where 
\begin{equation}
\mrho\eqdef  \max\left((1-\gamma\mu)^2,(\gamma \LL-1)^2,1-p^2\chi\frac{s-1}{n-1}\right)<1.\label{eqrate2j}
\end{equation}
Also, if $\sigma=0$, $(\mathbf{x}^t)_{t \ge 0}$ and $(\mathbf{\hat{x}}^{t})_{t\ge 0}$ both converge  to $\mathbf{x}^\star$ and $(\mathbf{h}^t)_{t\ge 0}$ converges to $\mathbf{h}^\star$, almost surely.
\end{theorem}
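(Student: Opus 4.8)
The plan is to establish the one-step contraction $\Exp{\Psi^{t+1}\mid \mathbf{x}^t,\mathbf{h}^t}\leq \mrho\,\Psi^t$ and then iterate, taking total expectations. First I would record three structural facts. (i) The map $\hat{\mathbf{x}}=\mathbf{x}-\gamma\nabla\mathbf{f}(\mathbf{x})+\gamma\mathbf{h}$ fixes the solution: since $\mathbf{h}^\star=\nabla\mathbf{f}(\mathbf{x}^\star)$, we have $\mathbf{x}^\star-\gamma\nabla\mathbf{f}(\mathbf{x}^\star)+\gamma\mathbf{h}^\star=\mathbf{x}^\star$. (ii) The control variates keep zero average: $\sum_i d_i^t=(1+\omega)\sum_{i\in\Omega^t}(\hat{x}_i^t-\bar{x}^t)=0$, so $\sum_i h_i^t=0$ for all $t$; as $\sum_i h_i^\star=n\nabla f(x^\star)=0$ too, the vector $\mathbf{h}^t-\mathbf{h}^\star$ lies in the range of the orthogonal projector $W=W^*=W^2$, i.e.\ $W(\mathbf{h}^t-\mathbf{h}^\star)=\mathbf{h}^t-\mathbf{h}^\star$. (iii) Setting $\mathbf{a}^t\eqdef(\mathbf{x}^t-\mathbf{x}^\star)-\gamma(\nabla\mathbf{f}(\mathbf{x}^t)-\nabla\mathbf{f}(\mathbf{x}^\star))$, the standard gradient-descent contraction for each $L$-smooth $\mu$-strongly convex $f_i$ gives $\sqnorm{\mathbf{a}^t}\leq q^2\sqnorm{\mathbf{x}^t-\mathbf{x}^\star}$ with $q^2\eqdef\max\big((1-\gamma\mu)^2,(\gamma L-1)^2\big)$, while $\hat{\mathbf{x}}^t-\mathbf{x}^\star=\mathbf{a}^t+\gamma(\mathbf{h}^t-\mathbf{h}^\star)$.

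Next I would carry out the three expectations over the random $s$-subset $\Omega^t$ (given $\theta^t=1$), which form the technical heart of the argument. A direct second-moment computation for uniform sampling yields $\Exp{d_i^t}=(1+\omega)\frac{s-1}{n-1}\big(\hat{x}_i^t-\overline{\hat{x}}^t\big)$, where $\overline{\hat{x}}^t$ is the average of the $\hat{x}_j^t$; the prescribed $\omega=\frac{n-1}{p(s-1)}-1$, equivalently $(1+\omega)\frac{s-1}{n-1}=\frac1p$, collapses this to $\Exp{\mathbf{d}^t}=\frac1p W\hat{\mathbf{x}}^t$. Likewise $\Exp{\sqnorm{\mathbf{d}^t}}=(1+\omega)^2\frac{s-1}{n-1}\sqnorm{W\hat{\mathbf{x}}^t}=\frac{1+\omega}{p}\sqnorm{W\hat{\mathbf{x}}^t}$, and the averaging-variance identity gives $n\,\Exp{\sqnorm{\bar{x}^t-x^\star}}=\sqnorm{(I-W)\hat{\mathbf{x}}^t-\mathbf{x}^\star}+\frac{n-s}{s(n-1)}\sqnorm{W\hat{\mathbf{x}}^t}$.

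With these in hand I would take the conditional expectation of each half of $\Psi^{t+1}$. The primal half becomes $\frac1\gamma\sqnorm{(I-W)\hat{\mathbf{x}}^t-\mathbf{x}^\star}+\frac{\beta}{\gamma}\sqnorm{W\hat{\mathbf{x}}^t}$ with $\beta\eqdef(1-p)+p\frac{n-s}{s(n-1)}=1-p\frac{n(s-1)}{s(n-1)}$, while the dual half expands to $\frac{\gamma(1+\omega)}{p\chi}\sqnorm{\mathbf{h}^t-\mathbf{h}^\star}-2\langle \mathbf{h}^t-\mathbf{h}^\star,\hat{\mathbf{x}}^t-\mathbf{x}^\star\rangle+\frac{p\chi}{\gamma}\sqnorm{W\hat{\mathbf{x}}^t}$. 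Substituting $\hat{\mathbf{x}}^t-\mathbf{x}^\star=\mathbf{a}^t+\gamma(\mathbf{h}^t-\mathbf{h}^\star)$, together with $(I-W)\hat{\mathbf{x}}^t-\mathbf{x}^\star=(I-W)\mathbf{a}^t$, $W\hat{\mathbf{x}}^t=W\mathbf{a}^t+\gamma(\mathbf{h}^t-\mathbf{h}^\star)$, and $\langle \mathbf{a}^t,\mathbf{h}^t-\mathbf{h}^\star\rangle=\langle W\mathbf{a}^t,\mathbf{h}^t-\mathbf{h}^\star\rangle$ (fact (ii)), I would regroup into three blocks. The pure $\mathbf{a}^t$ block has total coefficient $\frac1\gamma$, so $\frac1\gamma\sqnorm{\mathbf{a}^t}\leq \frac{q^2}{\gamma}\sqnorm{\mathbf{x}^t-\mathbf{x}^\star}\leq \frac{\mrho}{\gamma}\sqnorm{\mathbf{x}^t-\mathbf{x}^\star}$; the $\sqnorm{\mathbf{h}^t-\mathbf{h}^\star}$ block contributes exactly $\mrho\,\frac{\gamma(1+\omega)}{p\chi}\sqnorm{\mathbf{h}^t-\mathbf{h}^\star}$, once one inserts $\frac{1+\omega}{p\chi}=\frac{n-1}{p^2\chi(s-1)}$ and $\mrho=1-p^2\chi\frac{s-1}{n-1}$.

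The delicate part, and what I expect to be the main obstacle, is the leftover third block. Writing $\delta\eqdef\frac{n(s-1)}{s(n-1)}-\chi$, which is $\geq 0$ \emph{precisely} because of the hypothesis $0<\chi\leq\frac{n(s-1)}{s(n-1)}$, the cross term together with the unused portions of the $\sqnorm{W\mathbf{a}^t}$ and $\sqnorm{\mathbf{h}^t-\mathbf{h}^\star}$ coefficients collect to
\[
-p\delta\Big(\tfrac1\gamma\sqnorm{W\mathbf{a}^t}+2\langle W\mathbf{a}^t,\mathbf{h}^t-\mathbf{h}^\star\rangle+\gamma\sqnorm{\mathbf{h}^t-\mathbf{h}^\star}\Big)=-p\delta\,\sqnorm{\tfrac{1}{\sqrt\gamma}W\mathbf{a}^t+\sqrt\gamma\,(\mathbf{h}^t-\mathbf{h}^\star)}\leq 0.
\]
Recognizing this residual as a perfect square is exactly what closes the bound for every admissible $\chi$, not only at the boundary $\chi=\frac{n(s-1)}{s(n-1)}$ where the cross term vanishes outright. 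Summing the three blocks gives $\Exp{\Psi^{t+1}\mid \mathbf{x}^t,\mathbf{h}^t}\leq \mrho\,\Psi^t$; taking total expectation and iterating yields $\Exp{\Psi^t}\leq \mrho^t\Psi^0$. Finally, since $\Psi^t\geq 0$ and $\Exp{\Psi^{t+1}\mid \mathbf{x}^t,\mathbf{h}^t}\leq \Psi^t$, the sequence $(\Psi^t)$ is a nonnegative supermartingale, hence converges almost surely; because $\Exp{\Psi^t}\to 0$, its limit is $0$ almost surely, forcing $\mathbf{x}^t\to\mathbf{x}^\star$ and $\mathbf{h}^t\to\mathbf{h}^\star$, and then $\hat{\mathbf{x}}^t\to\mathbf{x}^\star$ via $\hat{\mathbf{x}}^t-\mathbf{x}^\star=\mathbf{a}^t+\gamma(\mathbf{h}^t-\mathbf{h}^\star)$, almost surely.
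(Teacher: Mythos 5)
Your proposal is correct and follows essentially the same route as the paper's proof: the same exact mean and second-moment computations for the subsampling estimator $\mathbf{d}^t$ (with $(1+\omega)\frac{s-1}{n-1}=\frac{1}{p}$), the same averaging-variance identity with $\nu=\frac{n-s}{s(n-1)}$, the same gradient-step contraction factor $\max(1-\gamma\mu,\gamma L-1)^2$, and the same supermartingale argument for almost sure convergence. The only difference is organizational: since $\frac{1}{\sqrt{\gamma}}W\mathbf{a}^t+\sqrt{\gamma}\,(\mathbf{h}^t-\mathbf{h}^\star)=\frac{1}{\sqrt{\gamma}}W\mathbf{\hat{x}}^t$, your residual perfect square $-p\delta\,\sqnorm{\frac{1}{\sqrt{\gamma}}W\mathbf{a}^t+\sqrt{\gamma}(\mathbf{h}^t-\mathbf{h}^\star)}$ is exactly the paper's leftover term $\bigl(\frac{p\chi}{\gamma}-\frac{p(1-\nu)}{\gamma}\bigr)\sqnorm{W\mathbf{\hat{x}}^t}$, which the paper makes nonpositive by the same hypothesis $\chi\leq 1-\nu$; the paper merely reaches this identity via the auxiliary variable $\mathbf{\hat{h}}^{t+1}=\mathbf{h}^t-\frac{p\chi}{\gamma}W\mathbf{\hat{x}}^t$ and inner-product cancellation instead of your early substitution.
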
\medskip

\begin{proof}
We consider the variables of Algorithm~\ref{alg2}. 
For every $t\geq 0$, we denote by $\mathcal{F}_0^t$ the $\sigma$-algebra generated by the collection of $\mathcal{X}$-valued random variables $\mathbf{x}^0,\mathbf{h}^0,\ldots, \mathbf{x}^t,\mathbf{h}^t$, and by $\mathcal{F}^t$ the $\sigma$-algebra generated by these variables, as well as the stochastic gradients $\mathbf{g}^t$. 
$\mathbf{d}^t$ is 
a random variable; as proved in Section~\ref{secrand1}, it satisfies the 3 following properties, on which the convergence analysis of Algorithm~\ref{alg2} relies: 
for every $t\geq 0$,
\begin{enumerate}
\item $\Exp{\mathbf{d}^t\;|\;\mathcal{F}^t}= W \mathbf{\hat{x}}^{t}$.
\item 
$\displaystyle
\Exp{ \sqnorm{\mathbf{d}^t - W \mathbf{\hat{x}}^{t}}\;|\;\mathcal{F}^t} \leq \omega  \sqnorm{W \mathbf{\hat{x}}^{t}}.
$
\item $\mathbf{d}^t$ belongs to the range of $W$; that is, $\sum_{i=1}^n d^t_i = 0$.
\end{enumerate}

We suppose that $\sum_{i=1}^n h^0_i = 0$. Then, it follows from the third property of $\mathbf{d}^t$ that, for every $t\geq 0$, 
$\sum_{i=1}^n h^t_i = 0$; that is, $W \mathbf{h}^t=\mathbf{h}^t$.\medskip

For every $t\geq 0$, we define
$\mathbf{\hat{h}}^{t+1}\eqdef  \mathbf{h}^t-\frac{p\chi}{\gamma} W  \mathbf{\hat{x}}^{t}$, 
$\mathbf{w}^t\eqdef \mathbf{x}^t-\gamma \mathbf{g}^t$ and $\mathbf{w}^\star\eqdef \mathbf{x}^\star-\gamma \nabla \mathbf{f}(\mathbf{x}^\star)$. We also define $\mathbf{\bar{x}}^{t\sharp}\eqdef(\bar{x}^{t\sharp})_{i=1}^n$, with $\bar{x}^{t\sharp}\eqdef \frac{1}{n}\sum_{i=1}^n \hat{x}_i^t$; that is, $\bar{x}^{t\sharp}$ is the exact average of the $\hat{x}_i^t$, of which $\bar{x}^t$ is an unbiased random estimate. 
Let $t\geq 0$. We have 
\begin{align*}
\Exp{\sqnorm{\mathbf{x}^{t+1}-\mathbf{x}^\star}\;|\;\mathcal{F}^t}&=
p \Exp{\sqnorm{\mathbf{\bar{x}}^{t}-\mathbf{x}^\star}\;|\;\mathcal{F}^t,\theta^t=1}+(1-p)\sqnorm{\mathbf{\hat{x}}^{t}-\mathbf{x}^\star},
\end{align*}
Since $\Exp{\mathbf{\bar{x}}^{t}\;|\;\mathcal{F}^t,\theta^t=1} = \mathbf{\bar{x}}^{t\sharp}$, we have
\begin{align*}
\Exp{\sqnorm{\mathbf{\bar{x}}^{t}-\mathbf{x}^\star}\;|\;\mathcal{F}^t,\theta^t=1}&=\sqnorm{\mathbf{\bar{x}}^{t\sharp}-\mathbf{x}^\star}
 +\Exp{\sqnorm{\mathbf{\bar{x}}^{t}-\mathbf{\bar{x}}^{t\sharp}}\;|\;\mathcal{F}^t,\theta=1},
 \end{align*}
with 
\begin{equation*}
\sqnorm{\mathbf{\bar{x}}^{t\sharp}-\mathbf{x}^\star}=\sqnorm{\mathbf{\hat{x}}^{t}-\mathbf{x}^\star}-\sqnorm{W  \mathbf{\hat{x}}^{t}}.
\end{equation*}
To evaluate $\Exp{\sqnorm{\mathbf{\bar{x}}^{t}-\mathbf{x}^\star}\;|\;\mathcal{F}^t,\theta^t=1}$, where the expectation is taken over both the active subset $\Omega^t$ and the compression mask $\mathbf{q}^t$, we first observe that the expectation and the squared Euclidean norm are separable across the $d$ coordinates. Consequently, we can analyze the coordinates independently of one another, despite the structural dependencies that exist between the rows of $\mathbf{q}^t$.
Furthermore, for any given coordinate $k\in[d]$, the two-step process of uniformly sampling $s$ elements from the $c$ available elements $\hat{x}_{i,k}^t$ (where $i\in\Omega^t$ and $\Omega^t$ is itself chosen uniformly at random) is statistically equivalent to directly sampling $s$ elements uniformly at random from the entire population of $n$ clients. Thus, for each coordinate $k\in[d]$, we can define an equivalent random subset $\widetilde{\Omega}_k^t \subset [n]$ of size $s$, chosen uniformly at random, which identifies the locations of the ones in the $k$-th row of $\mathbf{q}^t$. This allows us to express the aggregated coordinate as:
\begin{equation*}
\bar{x}^t_k =  \frac{1}{s}\sum_{i\in \widetilde{\Omega}_k^t} \hat{x}_{i,k}^t.
\end{equation*}
Then, as proved in \citet[Proposition 1]{con22mu},
	\begin{equation*}
\Exp{\sqnorm{\mathbf{\bar{x}}^{t}-\mathbf{\bar{x}}^{t\sharp}}\;|\;\mathcal{F}^t,\theta^t=1}=n \sum_{k=1}^d \mathbb{E}_{\widetilde{\Omega}_k^t}\!\left[\left(\frac{1}{s}\sum_{i\in\widetilde{\Omega}_k^t}\hat{x}_{i,k}^{t} - \frac{1}{n}\sum_{j=1}^n\hat{x}_{j,k}^{t}\right)^2\;|\;\mathcal{F}^t\right]=\nu \sqnorm{W \mathbf{\hat{x}}^{t}},
	\end{equation*}
	where
		\begin{equation}
	\nu\eqdef\frac{n-s}{s(n-1)} \in \left[0,\frac{1}{2}\right).\label{eqnu}
	\end{equation}
Moreover,
\begin{align*}
\sqnorm{ \mathbf{\hat{x}}^{t}-\mathbf{x}^\star}
&= \sqnorm{\mathbf{w}^t-\mathbf{w}^\star}+\gamma^2\sqnorm{ \mathbf{h}^t- \mathbf{h}^\star}
 +2\gamma\langle \mathbf{w}^t-\mathbf{w}^\star, \mathbf{h}^t- \mathbf{h}^\star\rangle
\\
&= \sqnorm{\mathbf{w}^t-\mathbf{w}^\star}-\gamma^2\sqnorm{ \mathbf{h}^t- \mathbf{h}^\star}
 +2\gamma\langle\mathbf{\hat{x}}^{t}-\mathbf{x}^\star, \mathbf{h}^t- \mathbf{h}^\star\rangle\\
 &= \sqnorm{\mathbf{w}^t-\mathbf{w}^\star}-\gamma^2\sqnorm{ \mathbf{h}^t- \mathbf{h}^\star}
 +2\gamma\langle\mathbf{\hat{x}}^{t}-\mathbf{x}^\star, \mathbf{\hat{h}}^{t+1}- \mathbf{h}^\star\rangle
  -2\gamma\langle\mathbf{\hat{x}}^{t}-\mathbf{x}^\star, \mathbf{\hat{h}}^{t+1}- \mathbf{h}^t\rangle\\
   &= \sqnorm{\mathbf{w}^t-\mathbf{w}^\star}-\gamma^2\sqnorm{ \mathbf{h}^t- \mathbf{h}^\star}
 +2\gamma\langle\mathbf{\hat{x}}^{t}-\mathbf{x}^\star, \mathbf{\hat{h}}^{t+1}- \mathbf{h}^\star\rangle
  +2p\chi \langle\mathbf{\hat{x}}^{t}-\mathbf{x}^\star, W\mathbf{\hat{x}}^t\rangle\\
    &= \sqnorm{\mathbf{w}^t-\mathbf{w}^\star}-\gamma^2\sqnorm{ \mathbf{h}^t- \mathbf{h}^\star}
 +2\gamma\langle\mathbf{\hat{x}}^{t}-\mathbf{x}^\star, \mathbf{\hat{h}}^{t+1}- \mathbf{h}^\star\rangle
  +2p\chi \sqnorm{W\mathbf{\hat{x}}^t}.
 \end{align*}
Hence,
\begin{align*}
\Exp{\sqnorm{\mathbf{x}^{t+1}-\mathbf{x}^\star}\;|\;\mathcal{F}^t}&=
p\sqnorm{\mathbf{\hat{x}}^{t}-\mathbf{x}^\star}-p\sqnorm{W  \mathbf{\hat{x}}^{t}}+p\nu\sqnorm{W  \mathbf{\hat{x}}^{t}}
+(1-p)\sqnorm{\mathbf{\hat{x}}^{t}-\mathbf{x}^\star}\\
&=\sqnorm{\mathbf{\hat{x}}^{t}-\mathbf{x}^\star}-p(1-\nu)\sqnorm{W  \mathbf{\hat{x}}^{t}}\\
&=\sqnorm{\mathbf{w}^t-\mathbf{w}^\star}-\gamma^2\sqnorm{ \mathbf{h}^t- \mathbf{h}^\star}
 +2\gamma\langle\mathbf{\hat{x}}^{t}-\mathbf{x}^\star, \mathbf{\hat{h}}^{t+1}- \mathbf{h}^\star\rangle\\
 &\quad
  +\big(2p\chi-p(1-\nu)\big) \sqnorm{W\mathbf{\hat{x}}^t}.
\end{align*}

On the other hand, using the 3 properties of $\mathbf{d}^t$  stated above, we have
\begin{align*}
\Exp{\sqnorm{\mathbf{h}^{t+1}- \mathbf{h}^\star}\;|\;\mathcal{F}^t}&\leq \sqnorm{\mathbf{h}^t- \mathbf{h}^\star-\frac{p\chi}{\gamma(1+\omega)}W  \mathbf{\hat{x}}^{t}}
+\frac{\omega p^2\chi^2}{\gamma^2(1+\omega)^2}\sqnorm{W  \mathbf{\hat{x}}^{t}}\notag\\
&= \sqnorm{\mathbf{h}^t- \mathbf{h}^\star+\frac{1}{1+\omega}\big(\mathbf{\hat{h}}^{t+1} - \mathbf{h}^t\big)}
+\frac{\omega}{(1+\omega)^2}\sqnorm{\mathbf{\hat{h}}^{t+1} - \mathbf{h}^t }\notag\\
&=\sqnorm{\frac{\omega}{1+\omega}\big(\mathbf{h}^t- \mathbf{h}^\star\big)+\frac{1}{1+\omega}\big(\mathbf{\hat{h}}^{t+1} - \mathbf{h}^\star\big)}
+\frac{\omega}{(1+\omega)^2}\sqnorm{\mathbf{\hat{h}}^{t+1} - \mathbf{h}^t }\notag\\
&=\frac{\omega^2}{(1+\omega)^2}\sqnorm{\mathbf{h}^t- \mathbf{h}^\star}+\frac{1}{(1+\omega)^2}\sqnorm{\mathbf{\hat{h}}^{t+1}- \mathbf{h}^\star}\notag\\
&\quad+\frac{2\omega}{(1+\omega)^2}\langle \mathbf{h}^t- \mathbf{h}^\star,
\mathbf{\hat{h}}^{t+1}- \mathbf{h}^\star\rangle+\frac{\omega}{(1+\omega)^2}\sqnorm{\mathbf{\hat{h}}^{t+1} - \mathbf{h}^\star }\notag\\
&\quad+\frac{\omega}{(1+\omega)^2}\sqnorm{\mathbf{h}^t - \mathbf{h}^\star }-\frac{2\omega}{(1+\omega)^2}\langle \mathbf{h}^t- \mathbf{h}^\star,
\mathbf{\hat{h}}^{t+1}- \mathbf{h}^\star\rangle\notag\\
&=\frac{1}{1+\omega}\sqnorm{\mathbf{\hat{h}}^{t+1} - \mathbf{h}^\star }+\frac{\omega}{1+\omega}\sqnorm{\mathbf{h}^t - \mathbf{h}^\star }.
\end{align*}
Moreover,
\begin{align*}
\sqnorm{\mathbf{\hat{h}}^{t+1}- \mathbf{h}^\star}&=\sqnorm{( \mathbf{h}^t- \mathbf{h}^\star)+(\mathbf{\hat{h}}^{t+1}- \mathbf{h}^t)}\\
&=\sqnorm{ \mathbf{h}^t- \mathbf{h}^\star}+\sqnorm{\mathbf{\hat{h}}^{t+1}- \mathbf{h}^t}+2\langle  \mathbf{h}^t- \mathbf{h}^\star,\mathbf{\hat{h}}^{t+1}- \mathbf{h}^t\rangle\\
&=\sqnorm{ \mathbf{h}^t- \mathbf{h}^\star}+2 \langle \mathbf{\hat{h}}^{t+1}- \mathbf{h}^\star,\mathbf{\hat{h}}^{t+1}- \mathbf{h}^t\rangle - \sqnorm{\mathbf{\hat{h}}^{t+1}- \mathbf{h}^t}\\
&=\sqnorm{ \mathbf{h}^t- \mathbf{h}^\star} - \sqnorm{\mathbf{\hat{h}}^{t+1}- \mathbf{h}^t}-2\frac{p\chi}{\gamma} \langle \mathbf{\hat{h}}^{t+1}- \mathbf{h}^\star, W( \mathbf{\hat{x}}^{t}-\mathbf{x}^\star)\rangle\\
&=\sqnorm{ \mathbf{h}^t- \mathbf{h}^\star} - \frac{p^2\chi^2}{\gamma^2}\sqnorm{W\mathbf{\hat{x}}^t} -2\frac{p\chi}{\gamma} \langle W(\mathbf{\hat{h}}^{t+1}- \mathbf{h}^\star),  \mathbf{\hat{x}}^{t}-\mathbf{x}^\star\rangle\\
&=\sqnorm{ \mathbf{h}^t- \mathbf{h}^\star} - \frac{p^2\chi^2}{\gamma^2}\sqnorm{W\mathbf{\hat{x}}^t} -2\frac{p\chi}{\gamma} \langle \mathbf{\hat{h}}^{t+1}- \mathbf{h}^\star,  \mathbf{\hat{x}}^{t}-\mathbf{x}^\star\rangle.
\end{align*}
Hence, 
\begin{align}
\frac{1}{\gamma}\Exp{\sqnorm{\mathbf{x}^{t+1}-\mathbf{x}^\star}\;|\;\mathcal{F}^t}&+\frac{\gamma(1+\omega)}{p\chi}\Exp{\sqnorm{\mathbf{h}^{t+1}- \mathbf{h}^\star}\;|\;\mathcal{F}^t}\notag\\
&\leq  \frac{1}{\gamma}\sqnorm{\mathbf{w}^t-\mathbf{w}^\star}-\gamma\sqnorm{ \mathbf{h}^t- \mathbf{h}^\star}+\left(2\frac{p\chi}{\gamma}-\frac{p}{\gamma}(1-\nu)\right)\sqnorm{W\mathbf{\hat{x}}^t}\notag\\
&\quad+2 \langle
 \mathbf{\hat{x}}^{t}-\mathbf{x}^\star,\mathbf{\hat{h}}^{t+1}- \mathbf{h}^\star\rangle+\frac{\gamma}{p\chi}\sqnorm{ \mathbf{h}^t- \mathbf{h}^\star}\notag\\
&\quad- \frac{p\chi}{\gamma}\sqnorm{W\mathbf{\hat{x}}^t} -2 \langle \mathbf{\hat{h}}^{t+1}- \mathbf{h}^\star,  \mathbf{\hat{x}}^{t}-\mathbf{x}^\star\rangle +\frac{\gamma\omega}{p\chi}\sqnorm{\mathbf{h}^t - \mathbf{h}^\star }\notag\\
&=  \frac{1}{\gamma}\sqnorm{\mathbf{w}^t-\mathbf{w}^\star}+\left(\frac{\gamma(1+\omega)}{p\chi}-\gamma\right)\sqnorm{ \mathbf{h}^t- \mathbf{h}^\star}\notag\\
&\quad +\left(\frac{p\chi}{\gamma}-\frac{p(1-\nu)}{\gamma}\right)\sqnorm{W\mathbf{\hat{x}}^t}.\label{eq55}
\end{align}
Since we have supposed 
\begin{equation*}
0<\chi\leq 1-\nu =  \frac{n(s-1)}{s(n-1)} \in \left(\frac{1}{2},1\right],
\end{equation*}
we have
\begin{align*}
&\frac{1}{\gamma}\Exp{\sqnorm{\mathbf{x}^{t+1}-\mathbf{x}^\star}\;|\;\mathcal{F}^t}+\frac{\gamma(1+\omega)}{p\chi}\Exp{\sqnorm{\mathbf{h}^{t+1}- \mathbf{h}^\star}\;|\;\mathcal{F}^t}\\
&\qquad\qquad\leq  \frac{1}{\gamma}\sqnorm{\mathbf{w}^t-\mathbf{w}^\star}+\frac{\gamma(1+\omega)}{p\chi}\left(1-\frac{p\chi}{1+\omega}\right)\sqnorm{ \mathbf{h}^t- \mathbf{h}^\star}.
\end{align*}
Finally, 
\begin{equation*}
\Exp{\sqnorm{\mathbf{w}^t-\mathbf{w}^\star}\;|\;\mathcal{F}_0^t}\leq\sqnorm{(\mathrm{Id}-\gamma\nabla \mathbf{f})\mathbf{x}^t-(\mathrm{Id}-\gamma\nabla \mathbf{f})\mathbf{x}^\star} + \gamma^2\sigma^2,
\end{equation*}
and according to \citet[Lemma 1]{con22rp},
\begin{align*}
\sqnorm{(\mathrm{Id}-\gamma\nabla \mathbf{f})\mathbf{x}^t-(\mathrm{Id}-\gamma\nabla \mathbf{f})\mathbf{x}^\star} 
&\leq \max(1-\gamma\mu,\gamma \LL-1)^2 \sqnorm{\mathbf{x}^t-\mathbf{x}^\star}.
\end{align*}
Therefore,
\begin{align}
\Exp{\Psi^{t+1}\;|\;\mathcal{F}_0^t}&\leq \max\left((1-\gamma\mu)^2,(\gamma \LL-1)^2,1-\frac{p\chi}{1+\omega}\right)\Psi^t+\gamma\sigma^2\notag\\
&=\max\left((1-\gamma\mu)^2,(\gamma \LL-1)^2,1-p^2\chi\frac{s-1}{n-1}\right)\Psi^t+\gamma\sigma^2.\label{eqrec2b}
\end{align}
By applying the tower property of conditional expectation, we can recursively unroll \eqref{eqrec2b} to obtain the unconditional expectation of $\Psi^{t}$, which yields the stated linear convergence rate.

Furthermore, in the exact gradient regime ($\sigma=0$), the relation in \eqref{eqrec2b} implies that the sequence $(\Psi^t)_{t\in\mathbb{N}}$ forms a nonnegative supermartingale. Invoking standard supermartingale convergence theorems \citep[Proposition A.4.5]{ber15}, it follows that $\Psi^t \rightarrow 0$ almost surely. This consequently guarantees the almost sure convergence of both the primal variables $\mathbf{x}^t$ 
and the dual variables $\mathbf{h}^t$. 
Finally, by exploiting the Lipschitz continuity of $\nabla \mathbf{f}$, the squared distance $\|\mathbf{\hat{x}}^t-\mathbf{x}^\star\|^2$ can be upper bounded by a linear combination of $\|\mathbf{x}^t-\mathbf{x}^\star\|^2$ and $\| \mathbf{h}^t- \mathbf{h}^\star\|^2$. As a direct consequence, we deduce that $\Exp{\sqnorm{\mathbf{\hat{x}}^t-\mathbf{x}^\star}}$ also converges linearly to zero at the identical rate $\mrho$, and that $\mathbf{\hat{x}}^t \rightarrow \mathbf{x}^\star$ almost surely, thereby completing the proof.
\end{proof}

\subsection{The Random Variable $\mathbf{d}^t$}\label{secrand1}

We study  the random variable $\mathbf{d}^t$ used in Algorithm~\ref{alg2}. 
If $\theta^t = 0$, $\mathbf{d}^t= 0$. If, on the other hand, $\theta^t = 1$,  
 for every coordinate $k\in[d]$,
a subset $\widetilde{\Omega}_k^t \subset [n]$ of size $s$ is chosen uniformly at random. These sets $(\widetilde{\Omega}_k^t)_{k=1}^d$ are mutually dependent, but this does not matter for the derivations, since we can reason on the coordinates separately. 
Then, for every $k\in[d]$ and  $i\in[n]$,
\begin{equation*}
d^t_{i,k} \eqdef \left\{ \begin{array}{l}
a \left(\hat{x}_{i,k}^t - \frac{1}{s}\sum_{j\in \widetilde{\Omega}_k^t} \hat{x}_{j,k}^t\right)\ \mbox{if }  i\in \widetilde{\Omega}_k^t,\\
0 \ \mbox{otherwise},
\end{array}\right.
\end{equation*}
for some value $a>0$ to determine. We can check that $\sum_{i=1}^n d^t_i = 0$. We can also note that $\mathbf{d}^t$ depends only on $W\mathbf{\hat{x}}^{t}$ and not on $\mathbf{\hat{x}}^{t}$; in particular, if $\hat{x}_1^t=\cdots=\hat{x}_n^t$, $d_i^t=0$.
We have to set $a$ so that $\Exp{d_i^t}=\hat{x}_i^{t} - \frac{1}{n}\sum_{j=1}^n \hat{x}_j^{t}$, where the expectation is with respect to $\theta^t$ and the $\widetilde{\Omega}_k^t$  (all expectations in this section are conditional to $\mathbf{\hat{x}}^{t}$). So, let us calculate this expectation.

Let $k\in[d]$. For every $i\in [n]$, 
\begin{equation*}
\Exp{d^t_{i,k}}=p\frac{s}{n}\left(a\hat{x}_{i,k}^t - \frac{a}{s} \mathbb{E}_{\Omega: i\in\Omega}\!\left[  \sum_{j\in \Omega} \hat{x}_{j,k}^t\right]
\right),
\end{equation*}
 where $\mathbb{E}_{\Omega: i\in\Omega}$ denotes the expectation with respect to a subset $\Omega \subset [n]$ of size $s$ containing $i$ and chosen uniformly at random.
We have 
\begin{equation*}
\mathbb{E}_{\Omega: i\in\Omega}\!\left[  \sum_{j\in \Omega} \hat{x}_{j,k}^t\right] = \hat{x}_{i,k}^t + \frac{s-1}{n-1} \sum_{j\in [n]\backslash \{i\}} \hat{x}_{j,k}^t = \frac{n-s}{n-1}\hat{x}_{i,k}^t + \frac{s-1}{n-1} \sum_{j=1}^n \hat{x}_{j,k}^t.\label{eqgejkjg}
\end{equation*}
Hence, for every $i\in[n]$,
\begin{equation*}
\Exp{d^t_{i,k}}=p\frac{s}{n}\left(a - \frac{a}{s}\frac{n-s}{n-1}\right)\hat{x}_{i,k} - p\frac{s}{n}\frac{a}{s}\frac{s-1}{n-1} \sum_{j=1}^n \hat{x}_{j,k}.
\end{equation*}
Therefore, by setting 
\begin{equation*}
a\eqdef \frac{n-1}{p(s-1)},
\end{equation*}
we have, for every $i\in[n]$,
\begin{align*}
\Exp{d^t_{i,k}}&=p\frac{s}{n}\left(\frac{1}{p}\frac{n-1}{s-1} - \frac{1}{p}\frac{n-s}{s(s-1)}\right)\hat{x}_{i,k} - \frac{1}{n} \sum_{j=1}^n \hat{x}_{j,k}\\
&=\hat{x}_{i,k} - \frac{1}{n} \sum_{j=1}^n \hat{x}_{j,k},
\end{align*}
as desired.

Now, we want to find the value of $\omega$ such that 
\begin{equation}
\Exp{ \sqnorm{\mathbf{d}^t - W \mathbf{\hat{x}}^{t}}} \leq \omega  \sqnorm{W \mathbf{\hat{x}}^{t}}\label{hisihs}
\end{equation}
 or, equivalently,
\begin{equation*}
\Exp{\sum_{i=1}^n \sqnorm{d_i^t }} \leq (1+\omega) \sum_{i=1}^n \sqnorm{ \hat{x}_i^{t} - \frac{1}{n}\sum_{j=1}^n \hat{x}_j^{t}}.
\end{equation*}
We can reason on the coordinates separately, or all at once to ease the notations. We have
\begin{align*}
\Exp{\sum_{i=1}^n \sqnorm{d_i^t }} &= p\frac{s}{n}\sum_{i=1}^n \mathbb{E}_{\Omega: i\in\Omega} \sqnorm{a \hat{x}_i^{t}  -\frac{a}{s}\sum_{j\in\Omega} \hat{x}_j^{t} }.
\end{align*}
For every $i\in[n]$, 
\begin{align*}
 \mathbb{E}_{\Omega: i\in\Omega} \sqnorm{a \hat{x}_i^{t}  -\frac{a}{s}\sum_{j\in\Omega} \hat{x}_j^{t}}&= \mathbb{E}_{\Omega: i\in\Omega}\sqnorm{\left(a-\frac{a}{s}\right) \hat{x}_i^{t}  -\frac{a}{s}\sum_{j\in\Omega\backslash\{i\}} \hat{x}_j^{t} }\\
 &=\sqnorm{\left(a-\frac{a}{s}\right) \hat{x}_i^{t} }+\mathbb{E}_{\Omega: i\in\Omega}
 \sqnorm{\frac{a}{s}\sum_{j\in\Omega\backslash\{i\}} \hat{x}_j^{t} }\\
 &\quad -2 \left\langle \left(a-\frac{a}{s}\right) \hat{x}_i^{t}  ,
 \frac{a}{s} \mathbb{E}_{\Omega: i\in\Omega}\sum_{j\in\Omega\backslash\{i\}} \hat{x}_j^{t} \right\rangle.
\end{align*}
We have 
\begin{align*}
  \mathbb{E}_{\Omega: i\in\Omega}\sum_{j\in\Omega\backslash\{i\}} \hat{x}_j^{t} &=\frac{s-1}{n-1}\sum_{j\in[n]\backslash\{i\}} \hat{x}_j^{t} =\frac{s-1}{n-1}\left(\sum_{j=1}^n \hat{x}_j^{t} - \hat{x}_i^{t} \right)
\end{align*}
and
\begin{align*}
 \mathbb{E}_{\Omega: i\in\Omega}
 \sqnorm{\sum_{j\in\Omega\backslash\{i\}} \hat{x}_j^{t} }&=  \mathbb{E}_{\Omega: i\in\Omega}\sum_{j\in\Omega\backslash\{i\}} 
 \sqnorm{\hat{x}_j^{t}}+ \mathbb{E}_{\Omega: i\in\Omega} \sum_{j\in\Omega\backslash\{i\}}\sum_{j'\in\Omega\backslash\{i,j\}}
 \left\langle \hat{x}_j^{t},\hat{x}_{j'}^{t}\right\rangle\\
 &= \frac{s-1}{n-1}\sum_{j\in[n]\backslash\{i\}} \sqnorm{\hat{x}_j^{t}} + \frac{s-1}{n-1}\frac{s-2}{n-2}
 \sum_{j\in[n]\backslash\{i\}}\sum_{j'\in[n]\backslash\{i,j\}}\left\langle \hat{x}_j^{t},\hat{x}_{j'}^{t}\right\rangle\\
& = \frac{s-1}{n-1}\left(1-\frac{s-2}{n-2}\right)\sum_{j\in[n]\backslash\{i\}} \sqnorm{\hat{x}_j^{t}}+ \frac{s-1}{n-1}\frac{s-2}{n-2}
\sqnorm{ \sum_{j\in[n]\backslash\{i\}}\hat{x}_j^{t}}\\
& = \frac{s-1}{n-1}\frac{n-s}{n-2}\left(\sum_{j=1}^n \sqnorm{\hat{x}_j^{t}}-\sqnorm{\hat{x}_i^{t}}\right)
+ \frac{s-1}{n-1}\frac{s-2}{n-2}
\sqnorm{ \sum_{j=1}^n \hat{x}_j^{t}- \hat{x}_i^{t}} .
 \end{align*}
 Hence, 
 \begin{align*}
\Exp{\sum_{i=1}^n \sqnorm{d_i^t }} &= p\frac{s}{n}\sum_{i=1}^n \sqnorm{\left(a-\frac{a}{s}\right) \hat{x}_i^{t}   }+
p s\frac{a^2}{(s)^2} \frac{s-1}{n-1}\frac{n-s}{n-2}\sum_{j=1}^n \sqnorm{\hat{x}_j^{t}}\\
&\quad -
p\frac{s}{n}\frac{a^2}{(s)^2}  
\frac{s-1}{n-1}\frac{n-s}{n-2}\sum_{i=1}^n  \sqnorm{\hat{x}_i^{t}}+ p\frac{s}{n}\frac{a^2}{(s)^2}   \frac{s-1}{n-1}\frac{s-2}{n-2}\sum_{i=1}^n
\sqnorm{ \sum_{j=1}^n \hat{x}_j^{t}- \hat{x}_i^{t}} \\
&\quad -2 p\frac{s}{n} \frac{a}{s} \frac{s-1}{n-1} \left(a-\frac{a}{s}\right)\sum_{i=1}^n \left\langle  \hat{x}_i^{t} ,
 \sum_{j=1}^n \hat{x}_j^{t} -  \hat{x}_i^{t}  \right\rangle\\
 &= \frac{(n-1)^2}{p s n} \sum_{i=1}^n \sqnorm{ \hat{x}_i^{t}   }+
 \frac{(n-1)^2}{p s (s-1) n} \frac{n-s}{n-2}\sum_{i=1}^n \sqnorm{\hat{x}_i^{t}}\\
&\quad+ \frac{1}{p s } \frac{s-2}{s-1}  \frac{n-1}{n-2}
\sqnorm{ \sum_{i=1}^n \hat{x}_i^{t}} -2 \frac{1}{p s n} \frac{s-2}{s-1}  \frac{n-1}{n-2}
\sqnorm{ \sum_{i=1}^n \hat{x}_i^{t}} \\
&\quad+ \frac{1}{p s n} \frac{s-2}{s-1}  \frac{n-1}{n-2}\sum_{i=1}^n
\sqnorm{ \hat{x}_i^{t}}  +2 \frac{n-1}{p s n}   \sum_{i=1}^n \sqnorm{\hat{x}_i^{t}} 
 -2 \frac{n-1}{p s n}  \sqnorm{ \sum_{i=1}^n   \hat{x}_i^{t} }\\
  &= \frac{(n-1)(n+1)}{p s n} \sum_{i=1}^n \sqnorm{ \hat{x}_i^{t}   }+
 \frac{(n-1)^2}{p s (s-1) n} \frac{n-s}{n-2}\sum_{i=1}^n \sqnorm{\hat{x}_i^{t}}\\
&\quad- \frac{n-1}{p s n} \frac{s}{s-1}  
\sqnorm{ \sum_{i=1}^n \hat{x}_i^{t}} + \frac{1}{p s n} \frac{s-2}{s-1}  \frac{n-1}{n-2}\sum_{i=1}^n
\sqnorm{ \hat{x}_i^{t}} \\
&= \frac{(n^2-1)(s-1)(n-2)+(n-1)^2(n-s)+(s-2)(n-1)}{p s (s-1)n(n-2)} \sum_{i=1}^n \sqnorm{ \hat{x}_i^{t}   }\\
&\quad- \frac{n-1}{p  (s-1)n}  
\sqnorm{ \sum_{i=1}^n \hat{x}_i^{t}}  \\
&= \frac{n-1}{p  (s-1)} \sum_{i=1}^n \sqnorm{ \hat{x}_i^{t}   }- \frac{n-1}{p  (s-1)n}  
\sqnorm{ \sum_{i=1}^n \hat{x}_i^{t}} \\
&= \frac{n-1}{p  (s-1)} \sum_{i=1}^n \sqnorm{ \hat{x}_i^{t} - \frac{1}{n} \sum_{j=1}^n \hat{x}_j^{t}}.
\end{align*}
Therefore, \eqref{hisihs} holds with
\begin{equation*}
\omega =\frac{n-1}{p (s-1)}  -1,
\end{equation*}
and we have $a=1+\omega$.

\subsection{From Algorithm~\ref{alg1} to \algno}\label{sectam}

\algno formally restructures the single-loop dynamics of Algorithm~\ref{alg1} into a practical two-loop architecture, grouping each sequence of consecutive local steps and its subsequent communication step into a distinct round. The fundamental insight connecting the two algorithms lies in the treatment of idle clients: if a client $i\notin \Omega^t$ does not participate in a communication step at iteration $t$ (when $\theta^t=1$), its local variable $x_i$ is subsequently overwritten by the global aggregated model $\bar{x}^t$ (step 12 of Algorithm~\ref{alg1}). Consequently, any local computations performed by this idle client since its last active participation are entirely overwritten and mathematically redundant. However, in the single-loop formulation, during iterations where communication does not occur ($\theta^t=0$), the subset of clients $\Omega^{t'}$ destined to participate in the next communication step (at some future iteration $t'>t$) remains undetermined. \algno resolves this by introducing a two-loop structure where the active cohort $\Omega^{(r)}$ is explicitly sampled at the \emph{beginning} of round $r$. Consequently, only the active clients ($i\in\Omega^{(r)}$) download the current global model estimate (step 6 of \algno) to initialize their local steps. Clients excluded from the cohort ($i\notin\Omega^{(r)}$) remain strictly idle, executing zero computations and avoiding all uplink or downlink communication for the entirety of the round. 

Because a round essentially comprises a sequence of non-communicating iterations ($\theta^t=0$) terminated by a single communicating iteration ($\theta^t=1$), the total number of local steps per round exactly follows a geometric distribution and can be sampled upfront. Under this practical reordering, Algorithm~\ref{alg1} and \algno are strictly mathematically equivalent. To distinguish between the two temporal frames, \algno employs a double index $(r,\ell)$ for rounds and local steps, respectively.

To bridge Theorem~\ref{theo3} to our main result in Theorem~\ref{theo1}, we map the double indices back to the global iteration index $t$, allowing us to express the convergence rate with respect to the total number of local steps rather than the randomized number of rounds. 

Crucially, we aim to formulate our convergence bounds strictly in terms of the variables explicitly computed in the practical \algno, avoiding any reliance on the ``ghost'' variables maintained by idle clients in the theoretical Algorithm~\ref{alg1}. For this reason, Theorem~\ref{theo1} is expressed with respect to the aggregated global model $\bar{x}^t$, which exclusively tracks the actively communicating clients and represents the true state of the server at any communication milestone. 
We denote the practical Lyapunov function as $\overline{\Psi}$ in \eqref{eqlya1jf} to distinguish it from the theoretical $\Psi$ in \eqref{eqlya1j}. Building on this equivalence, the remainder of our analysis in Section~\ref{secalg2} proceeds using the structure of Algorithm~\ref{alg1} (and its vector form, Algorithm~\ref{alg2}) while maintaining identical definitions and notations.

Formally, let $t\geq 0$. If no communication occurs ($\theta^t=0$), we sample a cohort $\Omega^t\subset [n]$ of size $c$ uniformly at random, alongside a random binary mask $\mathbf{q}^t=(q_i^t)_{i\in\Omega^t} \in \mathbb{R}^{d \times c}$, allowing us to define the virtual aggregate 
 $\bar{x}^t\eqdef \frac{1}{s}\sum_{j\in\Omega^t} \mathcal{C}_j^t(\hat{x}_{j}^t)$ (In the context of Theorem~\ref{theo1}, $\Omega^t$ and $\mathbf{q}^t$ are the ones  used at the end of the round, a valid choice given their independence from past events). If communication does occur ($\theta^t=1$), the variables $\Omega^t$, $\mathbf{q}^t$ and $\bar{x}^t$ are explicitly instantiated by the algorithm.  In either case, our objective is to bound 
$\mathbb{E}\!\left[\sqnorm{\bar{x}^{t}-x^\star}\;|\;\mathcal{F}^t\right]$, where the expectation operates over the joint randomness of $\Omega^t$ and $\mathbf{q}^t$. 
Using the derivations already obtained, we have
\begin{align*}
n\Exp{\sqnorm{\bar{x}^{t}-x^\star}\;|\;\mathcal{F}^t}&=\sqnorm{\mathbf{\hat{x}}^{t}-\mathbf{x}^\star}-\sqnorm{W  \mathbf{\hat{x}}^{t}}+\nu \sqnorm{W \mathbf{\hat{x}}^{t}}\\
    &= \sqnorm{\mathbf{w}^t-\mathbf{w}^\star}-\gamma^2\sqnorm{ \mathbf{h}^t- \mathbf{h}^\star}
 +2\gamma\langle\mathbf{\hat{x}}^{t}-\mathbf{x}^\star, \mathbf{\hat{h}}^{t+1}- \mathbf{h}^\star\rangle\\
&\quad  +(2p\chi +\nu-1)\sqnorm{W\mathbf{\hat{x}}^t}\\
    &\leq \sqnorm{\mathbf{w}^t-\mathbf{w}^\star}-\gamma^2\sqnorm{ \mathbf{h}^t- \mathbf{h}^\star}
 +2\gamma\langle\mathbf{\hat{x}}^{t}-\mathbf{x}^\star, \mathbf{\hat{h}}^{t+1}- \mathbf{h}^\star\rangle\\
&\quad  +\big(2p\chi -p(1-\nu)\big)\sqnorm{W\mathbf{\hat{x}}^t}.
 \end{align*}
Hence,
\begin{align*}
&\frac{n}{\gamma}\Exp{\sqnorm{\bar{x}^{t}-x^\star}\;|\;\mathcal{F}^t}+\frac{\gamma(1+\omega)}{p\chi}\Exp{\sqnorm{\mathbf{h}^{t+1}- \mathbf{h}^\star}\;|\;\mathcal{F}^t}\\
&\qquad\qquad\leq  \frac{1}{\gamma}\sqnorm{\mathbf{w}^t-\mathbf{w}^\star}+\frac{\gamma(1+\omega)}{p\chi}\left(1-\frac{p\chi}{1+\omega}\right)\sqnorm{ \mathbf{h}^t- \mathbf{h}^\star}
\end{align*}
and
\begin{align*}
&\frac{n}{\gamma}\Exp{\sqnorm{\bar{x}^{t}-x^\star}\;|\;\mathcal{F}_0^t}+\frac{\gamma(1+\omega)}{p\chi}\Exp{\sqnorm{\mathbf{h}^{t+1}- \mathbf{h}^\star}\;|\;\mathcal{F}_0^t}\\
&\qquad\qquad\leq\max\left((1-\gamma\mu)^2,(\gamma \LL-1)^2,1-p^2\chi\frac{s-1}{n-1}\right)\Psi^t+\gamma\sigma^2.
\end{align*}
Using the tower rule,
\begin{equation*}
\frac{n}{\gamma}\Exp{\sqnorm{\bar{x}^{t}-x^\star}}+\frac{\gamma(1+\omega)}{p\chi}\Exp{\sqnorm{\mathbf{h}^{t+1}- \mathbf{h}^\star}}\leq \mrho^t \Psi^0+\frac{\gamma\sigma^2}{1-\mrho}.
\end{equation*}
Since in \algno, $x_1^0 = \cdots = x_n^0 = \bar{x}^0 = \bar{x}^{(0)}$, then $\overline{\Psi}^0 = \Psi^0$. 
This concludes the proof of Theorem~\ref{theo1}.

\section{Proof of Theorem~\ref{theo2}}

We suppose that the assumptions in Theorem~\ref{theo2} hold. $s$ is set as the maximum of three values. Let us consider these three cases.

1) Suppose that $s=2$. 
Since $2=s \geq \lfloor \myc c\rfloor $ and $2=s\geq \lfloor\frac{c}{d}\rfloor$, we have  $\myc \leq \frac{3}{c}$ and $1\leq \frac{3d}{c}$.
Hence,
\begin{align}
&\mathcal{O}\left(\sqrt{\frac{n\kappa}{s}}+\frac{n}{s}\right)\!\left(\frac{sd}{c}+1+\myc d\right)\notag\\
={}&\mathcal{O}\left(\sqrt{n\kappa}+n\right)\!\left(\frac{d}{c}+\frac{d}{c}+\frac{d}{c}\right)\notag\\
={}&\mathcal{O}\left(d\frac{\sqrt{n\kappa}}{c}+d\frac{n}{c}\right).\label{eqb1}
\end{align}
 
 2) Suppose that $s=\lfloor\frac{c}{d}\rfloor$. Then $\frac{sd}{c}\leq 1$. Since $s \geq \lfloor \myc c\rfloor $ and $\lfloor\frac{c}{d}\rfloor=s\geq 2$, we have $\myc c\leq s+1\leq \frac{c}{d}+1$ and $\frac{d}{c}\leq \frac{1}{2}$, so that $\myc d \leq 1 + \frac{d}{c}\leq 2$. Hence,
\begin{align*}
&\mathcal{O}\left(\sqrt{\frac{n\kappa}{s}}+\frac{n}{s}\right)\!\left(\frac{sd}{c}+1+\myc d\right)\notag\\
={}&\mathcal{O}\left(\sqrt{\frac{n\kappa}{s}}+\frac{n}{s}\right).\notag
\end{align*}
Since $2s \geq \frac{c}{d}$, we have $\frac{1}{s} \leq \frac{2d}{c}$ and
\begin{align}
&\mathcal{O}\left(\sqrt{\frac{n\kappa}{s}}+\frac{n}{s}\right)\!\left(\frac{sd}{c}+1+\myc d\right)\notag\\
={}&\mathcal{O}\left(\sqrt{d}\sqrt{\frac{n\kappa}{c}}+d\frac{n}{c}\right).\label{eqb2}
\end{align}

 3) Suppose that $s=\lfloor \myc c\rfloor$. This implies $\myc>0$. Then $s\leq \myc c$. Also, $2s\geq \myc c$ and $\frac{1}{s}\leq \frac{2}{\myc c}$. Since $s=\lfloor \myc c\rfloor \geq \lfloor\frac{c}{d}\rfloor$, we have $\myc c+1\geq \frac{c}{d}$ and $1\leq \myc d +\frac{d}{c}$. Since $s=\lfloor \myc c\rfloor \geq 2$, we have $\frac{1}{c}\leq \frac{\myc}{2}$ and $1\leq 2\myc d$. Hence,
 \begin{align}
&\mathcal{O}\left(\sqrt{\frac{n\kappa}{s}}+\frac{n}{s}\right)\!\left(\frac{sd}{c}+1+\myc d\right)\notag\\
={}&\mathcal{O}\left(\sqrt{\frac{n\kappa}{\myc c}}+\frac{n}{\myc c}\right)\!\left(\myc d+\myc d+\myc d\right)\notag\\
={}&\mathcal{O}\left(\sqrt{\myc}d\sqrt{\frac{n\kappa}{c}}+d\frac{n}{c}\right).\label{eqb3}
\end{align}

By adding up the three upper bounds \eqref{eqb1}, \eqref{eqb2}, \eqref{eqb3}, we obtain the upper bound in \eqref{eqbt}.

\section{Sublinear Convergence in the Convex Case}\label{secmco}

In this section, we relax the strong convexity assumption. We assume only that the individual objective functions $f_i$ are convex and $\LL$-smooth, and that at least one global minimizer $x^\star \in \mathbb{R}^d$ to \eqref{eqpro1} exists. For simplicity, our analysis is restricted to the deterministic setting with exact gradients ($\sigma=0$). Under these relaxed conditions, we establish a sublinear ergodic convergence rate.

\begin{theorem}[sublinear convergence]\label{theo4}
In Algorithm~\ref{alg1}
suppose that $\sigma=0$ and that the parameters satisfy
\begin{equation*}
0<\gamma < \frac{2}{\LL}\quad\mbox{and}\quad 0<\chi<  \frac{n(s-1)}{s(n-1)} \in \left(\frac{1}{2},1\right].
\end{equation*}
For every $i=1,\ldots,n$ and $T\geq 0$, define the ergodic average
\begin{equation}
\tilde{x}_i^T \eqdef \frac{1}{T+1} \sum_{t=0}^T x_i^t.\label{eqwkh}
\end{equation}
Then, the gradient norm evaluated at the ergodic average converges sublinearly:
\begin{equation*}
 \Exp{\sqnorm{\nabla f(\tilde{x}_i^T)}} = \mathcal{O}\left(\frac{1}{T}\right).
\end{equation*}
\end{theorem}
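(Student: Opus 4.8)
The plan is to recycle the single-iteration computation behind Theorem~\ref{theo3} essentially unchanged: the identity \eqref{eq55} never uses strong convexity, so I keep it as is and only re-estimate the term $\sqnorm{\mathbf{w}^t-\mathbf{w}^\star}$. With $\mu=0$ I expand $\sqnorm{\mathbf{w}^t-\mathbf{w}^\star}=\sqnorm{\mathbf{x}^t-\mathbf{x}^\star}-2\gamma\langle\nabla\mathbf{f}(\mathbf{x}^t)-\nabla\mathbf{f}(\mathbf{x}^\star),\mathbf{x}^t-\mathbf{x}^\star\rangle+\gamma^2\sqnorm{\nabla\mathbf{f}(\mathbf{x}^t)-\nabla\mathbf{f}(\mathbf{x}^\star)}$ and invoke the $\frac1L$-cocoercivity of each $\nabla f_i$. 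Splitting $-2\gamma\langle\cdot,\cdot\rangle=-a\langle\cdot,\cdot\rangle-(2\gamma-a)\langle\cdot,\cdot\rangle$ and lower-bounding the second inner product by $\frac{2\gamma-a}{L}\sqnorm{\cdot}$, one checks that any $a\in\big(0,\gamma(2-\gamma L)\big)$---a nonempty interval precisely because $\gamma<\frac2L$---yields $\sqnorm{\mathbf{w}^t-\mathbf{w}^\star}\le\sqnorm{\mathbf{x}^t-\mathbf{x}^\star}-a\,\langle\nabla\mathbf{f}(\mathbf{x}^t)-\nabla\mathbf{f}(\mathbf{x}^\star),\mathbf{x}^t-\mathbf{x}^\star\rangle-b\,\sqnorm{\nabla\mathbf{f}(\mathbf{x}^t)-\nabla\mathbf{f}(\mathbf{x}^\star)}$ with $a,b>0$. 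Substituting into \eqref{eq55}, and now using the \emph{strict} hypothesis $\chi<1-\nu$ (rather than $\chi\le1-\nu$) to keep the coefficient of $\sqnorm{W\mathbf{\hat{x}}^t}$ strictly negative, I obtain a pure descent inequality $\Exp{\Psi^{t+1}\,|\,\mathcal{F}_t}\le\Psi^t-a\,\langle\nabla\mathbf{f}(\mathbf{x}^t)-\nabla\mathbf{f}(\mathbf{x}^\star),\mathbf{x}^t-\mathbf{x}^\star\rangle-b\,\sqnorm{\nabla\mathbf{f}(\mathbf{x}^t)-\nabla\mathbf{f}(\mathbf{x}^\star)}-c_1\sqnorm{W\mathbf{\hat{x}}^t}-c_2\sqnorm{\mathbf{h}^t-\mathbf{h}^\star}$ for the Lyapunov function $\Psi^t$ of \eqref{eqlya1j}, all constants being positive.

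Taking total expectations and summing over $t=0,\dots,T$, every nonnegative term on the right telescopes against $\Psi^0-\Exp{\Psi^{T+1}}\le\Psi^0$. With constants independent of $T$, this delivers three summable quantities: $\sum_{t=0}^T\Exp{\langle\nabla\mathbf{f}(\mathbf{x}^t)-\nabla\mathbf{f}(\mathbf{x}^\star),\mathbf{x}^t-\mathbf{x}^\star\rangle}=\mathcal{O}(1)$, $\sum_{t=0}^T\Exp{\sqnorm{\nabla\mathbf{f}(\mathbf{x}^t)-\nabla\mathbf{f}(\mathbf{x}^\star)}}=\mathcal{O}(1)$, and $\sum_{t=0}^T\Exp{\sqnorm{W\mathbf{\hat{x}}^t}}=\mathcal{O}(1)$. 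Since each summand of the first inner product dominates the Bregman divergence $D_{f_i}(x_i^t,x^\star)\eqdef f_i(x_i^t)-f_i(x^\star)-\langle\nabla f_i(x^\star),x_i^t-x^\star\rangle\ge0$, the first bound gives $\sum_t\sum_i\Exp{D_{f_i}(x_i^t,x^\star)}=\mathcal{O}(1)$. Moreover, because at a communication iteration all local models are reset to the common value $\bar{x}^t$ and otherwise equal $\hat{x}_i^t$, the iterate consensus error obeys $\sum_t\sum_j\Exp{\sqnorm{x_i^t-x_j^t}}=\mathcal{O}(1)$, controlled up to the (consensual) initialization by $\sum_t\Exp{\sqnorm{W\mathbf{\hat{x}}^t}}$.

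To reach the stated bound, fix a client $i$. As $f$ is convex, $L$-smooth and minimized at $x^\star$, I combine $\sqnorm{\nabla f(\tilde{x}_i^T)}\le 2L\big(f(\tilde{x}_i^T)-f(x^\star)\big)$ with Jensen's inequality $f(\tilde{x}_i^T)-f(x^\star)\le\frac1{T+1}\sum_{t=0}^T\big(f(x_i^t)-f(x^\star)\big)$, reducing the goal to $\sum_t\Exp{f(x_i^t)-f(x^\star)}=\mathcal{O}(1)$. Writing $f(x_i^t)-f(x^\star)=\frac1n\sum_j\big(f_j(x_i^t)-f_j(x^\star)\big)$, applying the descent lemma to each $f_j$ around $x_j^t$, and splitting $\nabla f_j(x_j^t)=\nabla f_j(x^\star)+\big(\nabla f_j(x_j^t)-\nabla f_j(x^\star)\big)$, I get $f(x_i^t)-f(x^\star)\le\frac1n\sum_j D_{f_j}(x_j^t,x^\star)+\frac{L}{2n}\sum_j\sqnorm{x_i^t-x_j^t}+\frac1n\sum_j\langle\nabla f_j(x_j^t)-\nabla f_j(x^\star),x_i^t-x_j^t\rangle+\frac1n\sum_j\langle\nabla f_j(x^\star),x_i^t-x^\star\rangle$. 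The last sum equals $\frac1n\langle\sum_j\nabla f_j(x^\star),x_i^t-x^\star\rangle=0$ because $\sum_j\nabla f_j(x^\star)=n\nabla f(x^\star)=0$ (equivalently $\sum_j h_j^\star=0$); this cancellation is the linchpin. I then bound the surviving cross term by Young's inequality, $\frac1n\sum_j\langle\nabla f_j(x_j^t)-\nabla f_j(x^\star),x_i^t-x_j^t\rangle\le\frac1{2n}\sum_j\sqnorm{\nabla f_j(x_j^t)-\nabla f_j(x^\star)}+\frac1{2n}\sum_j\sqnorm{x_i^t-x_j^t}$. Every remaining term sums to $\mathcal{O}(1)$ by the three estimates above, hence $\sum_t\Exp{f(x_i^t)-f(x^\star)}=\mathcal{O}(1)$ and $\Exp{\sqnorm{\nabla f(\tilde{x}_i^T)}}=\mathcal{O}(1/T)$.

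The main obstacle is exactly this last conversion. The telescoped inequality controls only matched-index distributed quantities $\sum_i D_{f_i}(x_i^t,x^\star)$ and $\sum_i\sqnorm{\nabla f_i(x_i^t)-\nabla f_i(x^\star)}$, whereas the claim concerns the \emph{global} objective $f=\frac1n\sum_j f_j$ evaluated at a \emph{single} client's time-average $\tilde{x}_i^T$. Bridging the gap forces one to cross function indices, producing both consensus-error terms and terms linear in $\nabla f_j(x^\star)$; the two delicate points are that the latter must cancel exactly through $\sum_j\nabla f_j(x^\star)=0$, and that the cross term must be split by Young's inequality---so it stays quadratic and summable---rather than by a Cauchy--Schwarz in time, which would couple the non-summable factor $\sqnorm{x_i^t-x^\star}$ and only yield the weaker $\mathcal{O}(1/\sqrt{T})$ rate.
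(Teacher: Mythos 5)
Your proof is correct, and while its first half is essentially the paper's, the conversion to the final bound takes a genuinely different route. Like the paper, you reuse \eqref{eq55} (which indeed never invokes strong convexity), re-estimate $\sqnorm{\mathbf{w}^t-\mathbf{w}^\star}$ by cocoercivity, and telescope the resulting descent inequality under the strict hypothesis $\chi<1-\nu$; the only cosmetic difference there is that you split the cocoercivity estimate to keep an explicit summable term $\sqnorm{\nabla\mathbf{f}(\mathbf{x}^t)-\nabla\mathbf{f}(\mathbf{x}^\star)}$, whereas the paper recovers the same quantity from the summable Bregman divergences via $D_{f_i}(x,x^\star)\geq\frac{1}{2L}\sqnorm{\nabla f_i(x)-\nabla f_i(x^\star)}$. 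The divergence comes afterwards. The paper works at the gradient level with ergodic averages of \emph{all} variables ($\tilde{x}_i^T$, $\tilde{x}^T$, $\tilde{\hat{x}}_i^T$, $\tilde{\hat{x}}^T$, $\tilde{h}_i^T$): it bounds $\sqnorm{\nabla f(\tilde{x}_i^T)}$ by Lipschitz splittings through the ergodic consensus error $\sqnorm{\tilde{x}_i^T-\tilde{x}^T}$ and the ergodic Bregman divergences, and it controls that consensus error through the iterate identity $\hat{x}_i^t=x_i^t-\gamma\big(\nabla f_i(x_i^t)-h_i^t\big)$, which is precisely why it needs the ergodic control-variate errors $\sqnorm{\tilde{h}_i^T-h_i^\star}$. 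You instead work at the function-value level: $\sqnorm{\nabla f(\tilde{x}_i^T)}\leq 2L\big(f(\tilde{x}_i^T)-f(x^\star)\big)$ plus Jensen in time reduce the claim to $\sum_{t}\Exp{f(x_i^t)-f(x^\star)}=\mathcal{O}(1)$, which you obtain from your decomposition with the exact cancellation $\sum_j\nabla f_j(x^\star)=0$ and Young's inequality; and you control the iterate consensus error pointwise via $\sqnorm{W\mathbf{x}^{t+1}}\leq\sqnorm{W\mathbf{\hat{x}}^{t}}$ (equality when $\theta^t=0$, zero when $\theta^t=1$), bypassing $\tilde{h}_i^T$ and $\tilde{\hat{x}}_i^T$ altogether (and the non-consensual initialization of Algorithm~\ref{alg1} only adds a fixed $t=0$ constant, so your parenthetical assumption is harmless). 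Your route is leaner, needs fewer auxiliary ergodic quantities, and yields the suboptimality rate $\Exp{f(\tilde{x}_i^T)-f(x^\star)}=\mathcal{O}(1/T)$ as a by-product; the paper's route keeps every bound expressed directly in the three telescoped quantities, which is what produces the fully explicit constant in its final display. Your closing remark is also on point: splitting the cross term by Young, rather than by a Cauchy--Schwarz across time against the non-summable $\sqnorm{x_i^t-x^\star}$, is exactly what preserves the $\mathcal{O}(1/T)$ rate.
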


\begin{proof}
Let $x^\star \in \mathbb{R}^d$ be any solution to \eqref{eqpro1}. The first-order optimality condition yields $\nabla f(x^\star)=\frac{1}{n}\sum_{i=1}^n\nabla f_i(x^\star)=0$. While the minimizer $x^\star$ is not necessarily unique, the local gradients evaluated at the optimum, $h_i^\star \eqdef \nabla f_i(x^\star)$, are invariant across all global minima due to the convexity and smoothness of $f_i$.

Recall that the Bregman divergence associated with an $\LL$-smooth convex function $g$ evaluated at points $x,x' \in\mathbb{R}^d$ is defined as $D_{g}(x,x')\coloneqq g(x)-g(x')-\langle \nabla g(x'),x-x'\rangle \geq 0$. Standard properties of smooth convexity guarantee that $D_{g}(x,x')\geq \frac{1}{2L} \| \nabla g(x)-\nabla g(x')\|^2$. Furthermore, for any $x\in\mathbb{R}^d$ and $i \in [n]$, the quantity $D_{f_i}(x,x^\star)$ remains identical regardless of which specific global minimizer $x^\star$ is chosen.

For every iteration $t\geq 0$, we define the Lyapunov function
\begin{equation}
\Psi^{t}\eqdef  \frac{1}{\gamma}\sum_{i=1}^n\sqnorm{x_i^{t}-x^\star}+ \frac{\gamma}{p^2 \chi}
\frac{n-1}{s-1}
\sum_{i=1}^n\sqnorm{h_i^{t}-h_i^\star},\label{eqlya1jfm}
\end{equation}
Recalling the single-step bound established in \eqref{eq55}, we have for any $t\geq 0$,
\begin{align*}
\Exp{\Psi^{t+1}\;|\;\mathcal{F}^t}&=
\frac{1}{\gamma}\sum_{i=1}^n\Exp{\sqnorm{x_i^{t+1}-x^\star}\;|\;\mathcal{F}^t}+\frac{\gamma}{p^2 \chi}
\frac{n-1}{s-1}\sum_{i=1}^n\Exp{\sqnorm{h_i^{t+1}- h_i^\star}\;|\;\mathcal{F}^t}\\
&\leq    \frac{1}{\gamma}\sum_{i=1}^n\sqnorm{\big(x_i^t-\gamma\nabla f_i(x_i^t)\big)-\big(x^\star-\gamma\nabla f_i(x^\star)\big)}\\
&\quad+\left(\frac{\gamma}{p^2 \chi}
\frac{n-1}{s-1}-\gamma\right)\sum_{i=1}^n\sqnorm{ h_i^t-h_i^\star}+\frac{p}{\gamma}(\chi-1+\nu)\sum_{i=1}^n\sqnorm{\hat{x}_i^t-\frac{1}{n}\sum_{j=1}^n \hat{x}_j^t},
\end{align*}
with
\begin{align*}
\sqnorm{\big(x_i^t-\gamma\nabla f_i(x_i^t)\big)-\big(x^\star-\gamma\nabla f_i(x^\star)\big)}&= \sqnorm{x_i^t-x^\star}  -2\gamma\langle \nabla f_i(x_i^t)-\nabla f_i(x^\star),x_i^t-x^\star\rangle \\
&\quad+ \gamma^2\sqnorm{\nabla f_i(x_i^t)-\nabla f_i(x^\star)}\\
&\leq \sqnorm{x_i^t-x^\star}-  (2\gamma-\gamma^2 \LL)\langle \nabla f_i(x_i^t)-\nabla f_i(x^\star),x_i^t-x^\star\rangle,
\end{align*}
where the second inequality follows from the cocoercivity of the gradient. Additionally, convexity ensures that 
$D_{f_i}(x,x')\leq \langle \nabla f_i(x)-\nabla f_i(x'),x-x'\rangle$  for every $x,x'$. 
Substituting these bounds gives
\begin{align*}
\Exp{\Psi^{t+1}\;|\;\mathcal{F}^t}&\leq    \Psi^t  -(2-\gamma \LL) \sum_{i=1}^n D_{f_i}(x_i^t,x^\star)\\
&\quad -\gamma \sum_{i=1}^n\sqnorm{ h_i^t-h_i^\star}+\frac{p}{\gamma}(\chi-1+\nu)\sum_{i=1}^n\sqnorm{\hat{x}_i^t-\frac{1}{n}\sum_{j=1}^n \hat{x}_j^t}.
\end{align*}
By telescoping this inequality from $t=0$ to $T$ and invoking the tower property of conditional expectation, we establish the summability of the three nonpositive terms: for every $T\geq 0$, we have
\begin{align}
&(2-\gamma \LL) \sum_{i=1}^n \sum_{t=0}^T \Exp{D_{f_i}(x_i^t,x^\star)} \leq \Psi^0-\Exp{\Psi^{T+1}}\leq \Psi^0,\label{eqjghhrgm}\\
&\gamma \sum_{i=1}^n \sum_{t=0}^T \Exp{\sqnorm{ h_i^t-h_i^\star}} \leq \Psi^0-\Exp{\Psi^{T+1}}\leq \Psi^0,\label{eqjghhrgm2}\\
&\frac{p}{\gamma}(1-\nu-\chi)\sum_{i=1}^n \sum_{t=0}^T \Exp{\sqnorm{\hat{x}_i^t-\frac{1}{n}\sum_{j=1}^n \hat{x}_j^t}} \leq \Psi^0-\Exp{\Psi^{T+1}}\leq \Psi^0.\label{eqjghhrgm3}
\end{align}
To derive the formal $\mathcal{O}(1/T)$ rate, we pass to ergodic averages and exploit the convexity of both the squared Euclidean norm and the Bregman divergence. We use a tilde to denote the time-averaged quantities. Specifically, for every $i \in [n]$ and $T\geq 0$, we define
\begin{equation*}
\tilde{x}_i^T \eqdef \frac{1}{T+1} \sum_{t=0}^T x_i^t \quad \text{and} \quad \tilde{x}^T \eqdef \frac{1}{n} \sum_{i=1}^n \tilde{x}_i^T.
\end{equation*}
Because the Bregman divergence is convex in its first argument, we have for every $T\geq 0$,
\begin{equation*}
\sum_{i=1}^n D_{f_i}(\tilde{x}_i^T,x^\star) \leq \sum_{i=1}^n \frac{1}{T+1} \sum_{t=0}^T D_{f_i}(x_i^t,x^\star).
\end{equation*}
Combining this with \eqref{eqjghhrgm} yields
\begin{equation}
(2-\gamma \LL)\sum_{i=1}^n \Exp{D_{f_i}(\tilde{x}_i^T,x^\star)} \leq \frac{\Psi^0}{T+1}.\label{eqsub1}
\end{equation}
Similarly, defining the dual ergodic average $\tilde{h}_i^T \eqdef \frac{1}{T+1} \sum_{t=0}^T h_i^t$, we obtain, for every $i\in[n]$ and $T\geq 0$,
\begin{equation*}
\sum_{i=1}^n\sqnorm{ \tilde{h}_i^T-h_i^\star} \leq \sum_{i=1}^n\frac{1}{T+1} \sum_{t=0}^T \sqnorm{ h_i^t-h_i^\star}.
\end{equation*}
Combining this inequality with \eqref{eqjghhrgm2} yields,  for every $T\geq 0$,
\begin{equation}
\gamma\sum_{i=1}^n \Exp{\sqnorm{ \tilde{h}_i^T-h_i^\star}} \leq \frac{\Psi^0}{T+1}.\label{eqsub2}
\end{equation}
Finally, for every $i=1,\ldots,n$ and $T\geq 0$, we define
\begin{equation*}
\tilde{\hat{x}}_i^T \eqdef \frac{1}{T+1} \sum_{t=0}^T \hat{x}_i^t\quad \text{and} \quad \tilde{\hat{x}}^T \eqdef \frac{1}{n} \sum_{i=1}^n \tilde{\hat{x}}_i^T.
\end{equation*}
Then, we have, for every $T\geq 0$, 
\begin{equation*}
\sum_{i=1}^n\sqnorm{ \tilde{\hat{x}}_i^T-\tilde{\hat{x}}^T} \leq \sum_{i=1}^n\frac{1}{T+1} \sum_{t=0}^T \sqnorm{ \hat{x}_i^t-\frac{1}{n}\sum_{j=1}^n \hat{x}_j^t}.
\end{equation*}
Combining this inequality with \eqref{eqjghhrgm3} yields,  for every $T\geq 0$,
\begin{equation}
\frac{p}{\gamma}(1-\nu-\chi)\sum_{i=1}^n \Exp{\sqnorm{ \tilde{\hat{x}}_i^T-\tilde{\hat{x}}^T}} \leq \frac{\Psi^0}{T+1}.\label{eqsub3}
\end{equation}
Next, we have, for every $i=1,\ldots,n$ and $T\geq 0$,
\begin{align}
\sqnorm{\nabla f(\tilde{x}_i^T)}&\leq 2\sqnorm{\nabla f(\tilde{x}_i^T)-\nabla f(\tilde{x}^T)}+2\sqnorm{\nabla f(\tilde{x}^T)}\notag\\
&\leq 2\LL^2\sqnorm{\tilde{x}_i^T-\tilde{x}^T}+2\sqnorm{\nabla f(\tilde{x}^T)}.\label{eqco1}
\end{align}
Moreover, since $\nabla f(x^\star) = 0$, we have, for every $T\geq 0$,
\begin{align}
\sqnorm{\nabla f(\tilde{x}^T)}&=\sqnorm{\nabla f(\tilde{x}^T)-\nabla f(x^\star)}\notag\\
&\leq \frac{1}{n}\sum_{i=1}^n \sqnorm{\nabla f_i(\tilde{x}^T)-\nabla f_i(x^\star)}\notag\\
&\leq \frac{2}{n}\sum_{i=1}^n \sqnorm{\nabla f_i(\tilde{x}^T)-\nabla f_i(\tilde{x}_i^T)}+\frac{2}{n}\sum_{i=1}^n \sqnorm{\nabla f_i(\tilde{x}_i^T)-\nabla f_i(x^\star)}\notag\\
&\leq \frac{2\LL^2}{n}\sum_{i=1}^n \sqnorm{\tilde{x}_i^T-\tilde{x}^T}+\frac{4\LL}{n}\sum_{i=1}^n D_{f_i}(\tilde{x}_i^T,x^\star).\label{eqco2}
\end{align}
It remains to bound the consensus deviation $\sqnorm{\tilde{x}_i^T-\tilde{x}^T}$: for every $T\geq 0$,
\begin{align}
\sum_{i=1}^n\sqnorm{\tilde{x}_i^T-\tilde{x}^T}&\leq 2\sum_{i=1}^n\sqnorm{(\tilde{x}_i^T-\tilde{x}^T)-(\tilde{\hat{x}}_i^T-\tilde{\hat{x}}^T)}+2\sum_{i=1}^n\sqnorm{\tilde{\hat{x}}_i^T-\tilde{\hat{x}}^T}\notag\\
&\leq 2\sum_{i=1}^n\sqnorm{\tilde{x}_i^T-\tilde{\hat{x}}_i^T}+2\sum_{i=1}^n\sqnorm{\tilde{\hat{x}}_i^T-\tilde{\hat{x}}^T}.\label{eqco3}
\end{align}
For every $i\in[n]$ and $t\geq 0$,
\begin{equation*}
\hat{x}_i^t = x_i^t - \gamma \big(\nabla f_i(x_i^t) -h_i^t\big).
\end{equation*}
This implies
\begin{equation*}
\tilde{x}_i^T - \tilde{\hat{x}}_i^T =  \gamma \frac{1}{T+1}\sum_{t=0}^T\nabla f_i(x_i^t) -\gamma \tilde{h}_i^T,
\end{equation*}
and
\begin{align}
\sqnorm{\tilde{x}_i^T-\tilde{\hat{x}}_i^T}&= \gamma^2\sqnorm{ \frac{1}{T+1}\sum_{t=0}^T\nabla f_i(x_i^t) -\tilde{h}_i^T}\notag\\
&\leq 2\gamma^2  \frac{1}{T+1}\sum_{t=0}^T \sqnorm{\nabla f_i(x_i^t) -\nabla f_i(x^\star)}+2\gamma^2 \sqnorm{\tilde{h}_i^T-h_i^\star}\notag\\
&\leq 4\LL\gamma^2 \frac{1}{T+1}\sum_{t=0}^T D_{f_i}(x_i^t,x^\star)+2\gamma^2 \sqnorm{\tilde{h}_i^T-h_i^\star}.\label{eqco4}
\end{align}
Combining \eqref{eqco1}, \eqref{eqco2},  \eqref{eqco3},  \eqref{eqco4}, we obtain, for every $T\geq 0$,
\begin{align*}
\sum_{i=1}^n \sqnorm{\nabla f(\tilde{x}_i^T)}&\leq 2\LL^2\sum_{i=1}^n\sqnorm{\tilde{x}_i^T-\tilde{x}^T}+2n\sqnorm{\nabla f(\tilde{x}^T)}\\
&\leq 2\LL^2\sum_{i=1}^n\sqnorm{\tilde{x}_i^T-\tilde{x}^T}+2\LL^2\sum_{i=1}^n \sqnorm{\tilde{x}_i^T-\tilde{x}^T}+4\LL\sum_{i=1}^n D_{f_i}(\tilde{x}_i^T,x^\star)\\
&= 4\LL^2\sum_{i=1}^n\sqnorm{\tilde{x}_i^T-\tilde{x}^T}+4\LL\sum_{i=1}^n D_{f_i}(\tilde{x}_i^T,x^\star)\\
&\leq 8\LL^2\sum_{i=1}^n\sqnorm{\tilde{x}_i^T-\tilde{\hat{x}}_i^T}+8\LL^2\sum_{i=1}^n\sqnorm{\tilde{\hat{x}}_i^T-\tilde{\hat{x}}^T}+4\LL\sum_{i=1}^n D_{f_i}(\tilde{x}_i^T,x^\star)\\
&\leq
32\LL^3\gamma^2 \frac{1}{T+1}\sum_{i=1}^n\sum_{t=0}^T D_{f_i}(x_i^t,x^\star)+16\LL^2\gamma^2 \sum_{i=1}^n\sqnorm{\tilde{h}_i^T-h_i^\star}\\
&\quad+8\LL^2\sum_{i=1}^n\sqnorm{\tilde{\hat{x}}_i^T-\tilde{\hat{x}}^T}+4\LL\sum_{i=1}^n D_{f_i}(\tilde{x}_i^T,x^\star).
\end{align*}
Taking the unconditional expectation and substituting the bounds established in \eqref{eqjghhrgm}, \eqref{eqsub2}, \eqref{eqsub3}, and \eqref{eqsub1}, we conclude:
\begin{align*}
\sum_{i=1}^n \Exp{\sqnorm{\nabla f(\tilde{x}_i^T)}}&\leq
32\LL^3\gamma^2 \frac{1}{T+1}\sum_{i=1}^n\sum_{t=0}^T \Exp{D_{f_i}(x_i^t,x^\star)}\\
&\quad+16\LL^2\gamma^2 \sum_{i=1}^n\Exp{\sqnorm{\tilde{h}_i^T-h_i^\star}}\\
&\quad+8\LL^2\sum_{i=1}^n\Exp{\sqnorm{\tilde{\hat{x}}_i^T-\tilde{\hat{x}}^T}}+4\LL\sum_{i=1}^n \Exp{D_{f_i}(\tilde{x}_i^T,x^\star)}.\\
&\leq
\frac{32\LL^3\gamma^2}{2-\gamma \LL} \frac{\Psi_0}{T+1}+16\LL^2\gamma \frac{\Psi_0}{T+1}+ \frac{8\LL^2\gamma}{p(1-\nu-\chi)} \frac{\Psi_0}{T+1} +\frac{4\LL}{2-\gamma \LL}\frac{\Psi_0}{T+1}\\
&=\left[\frac{32\LL^3\gamma^2+4\LL}{2-\gamma \LL}+16\LL^2\gamma + \frac{8\LL^2\gamma}{p(1-\nu-\chi)} \right]\frac{\Psi_0}{T+1}.
\end{align*}
\end{proof}

Consequently, by setting $\gamma = \Theta\left(\frac{p}{\LL}\sqrt{\frac{c}{n}}\right)$ and choosing $\chi$ such that $\delta\leq \chi \leq 1-\nu-\delta$ for some arbitrarily small $\delta>0$, and assuming initialization $h_i^0 = \nabla f_i(x^0)$ for all $i\in[n]$, we guarantee that for any $\epsilon>0$, $
 \sum_{i=1}^n \Exp{\sqnorm{\nabla f(\tilde{x}_i^T)}}\leq \epsilon
$
is successfully achieved after
\begin{equation*}
\mathcal{O}\left(\frac{\LL^2}{p}\sqrt{\frac{n}{c}}
\frac{\sqnorm{\mathbf{x}^{0}-\mathbf{x}^\star}}{\epsilon}\right)
\end{equation*} 
total local iterations, which corresponds to 
\begin{equation*}
\mathcal{O}\left(\LL^2\sqrt{\frac{n}{c}}
\frac{\sqnorm{\mathbf{x}^{0}-\mathbf{x}^\star}}{\epsilon}\right)
\end{equation*} 
communication rounds. 

Notably, in this general convex regime, LT does not yield theoretical acceleration: the communication complexity remains independent of $p$. CC, however, remains highly effective, as the algorithm transmits vastly fewer than $d$ real values per communication round.

Finally, while this theoretical analysis formally applies to the single-loop Algorithm~\ref{alg1} (where $\tilde{x}_i^T$ averages all local states, including the ``ghost'' variables of idle clients), the result naturally extends to our primary algorithm, \algno. To apply this to \algno, the ergodic average $\tilde{x}_i^T$ is simply redefined for each active client $i\in[n]$ as the average of the $x_i^{(r,\ell)}$ coordinates that are explicitly computed, which constitutes a statistically valid random subsequence of the underlying $x_i^t$ trajectory.

\end{document}